
\documentclass[12pt]{article}


\usepackage{scicite}

\usepackage{times}

\usepackage{booktabs}
\usepackage{xcolor}
\usepackage[colorlinks = true,
            linkcolor = blue,
            urlcolor  = blue,
            citecolor = blue,
            anchorcolor = blue]{hyperref}

\urlstyle{same}
\usepackage{graphicx}
\usepackage{amsmath}
\usepackage{amsfonts}
\usepackage{bm}
\def\x{\bm{x}}
\def\s{\bm{s}}
\def\n{\bm{n}}
\def\y{\bm{y}}
\usepackage{amsthm}
\newtheorem*{theorem}{Noise2Sim Theorem}
\usepackage{diagbox}
\usepackage{multirow}




\topmargin 0.0cm
\oddsidemargin 0.2cm
\textwidth 16cm 
\textheight 21cm
\footskip 1.0cm


\newenvironment{sciabstract}{%
\begin{quote} \bf}
{\end{quote}}


\title{Suppression of Correlated Noises with Similarity-based Unsupervised Deep Learning}


\author
{Chuang Niu,$^{1}$ Mengzhou Li,$^{1}$ Fenglei Fan,$^{1}$ Weiwen Wu,$^{1}$ Xiaodong Guo,$^{1}$\\
Qing Lyu,$^{1}$ Ge Wang$^{1\ast}$\\
\\
\normalsize{$^{1}$Department of Biomedical Engineering, Rensselaer Polytechnic Institute,}\\
\normalsize{110 8th Street, Troy, New York 12180, USA}\\
\\
\normalsize{$^\ast$To whom correspondence should be addressed; E-mail:  wangg6@rpi.edu.}
}


\date{}


\begin{document} 


\baselineskip24pt


\maketitle


\begin{sciabstract}
Image denoising is a prerequisite for downstream tasks in many fields. Low-dose and photon-counting computed tomography (CT) denoising can optimize diagnostic performance at minimized radiation dose. Supervised deep denoising methods are popular but require paired clean or noisy samples that are often unavailable in practice. Limited by the independent noise assumption, current unsupervised denoising methods cannot process correlated noises as in CT images. Here we propose the first-of-its-kind similarity-based unsupervised deep denoising approach, referred to as Noise2Sim, that works in a nonlocal and nonlinear fashion to suppress not only independent but also correlated noises. Theoretically, Noise2Sim is asymptotically equivalent to supervised learning methods under mild conditions. Experimentally, Nosie2Sim recovers intrinsic features from noisy low-dose CT and photon-counting CT images as effectively as or even better than supervised learning methods on practical datasets visually, quantitatively and statistically. Noise2Sim is a general unsupervised denoising approach and has great potential in diverse applications.
\end{sciabstract}


\section*{Introduction}

Computed tomography (CT) reconstructs cross-sectional or volumetric images from many X-ray projections taken at different angles, and is a widely used diagnostic tool around the world.
As an emerging CT technology, photon-counting CT (PCCT) is being actively developed with major advantages including rich tissue contrast, high spatial resolution, low radiation dose, and tracer-enhanced K edge imaging \cite{pcct}.
In the medical CT field, radiation dose must be minimized according to the "As Low As Reasonably Achievable" (ALARA) guideline~\cite{ldctrisk}.
Not surprisingly, reduced radiation dose will degrade the CT image quality and affect the diagnostic performance.
In particular, PCCT uses multiple energy windows and much reduced detector sizes. As a result, each line integral measurement can only be made with a significantly lower number of x-ray photons.
Over the past decade, great efforts have been made on low-dose CT (LDCT) reconstruction, also known as LDCT denoising, which has now a boosted momentum due to the recent FDA-approval of the PCCT technology \cite{pcct}.

Image denoising is to recover signals hidden in a noisy background.
Since noise is a statistical fluctuation governed by quantum mechanics, denoising is generally achieved by a mean/averaging operation. For example, local averaging methods include  Gaussian smoothing~\cite{LINDENBAUM19941},  anisotropic filtering\cite{56205,2158083}, neighborhood filtering~\cite{19870670317,Smith1997,710815}, and transform domain processing \cite{dct}. On the other hand, nonlocal averaging methods use various nonlocal means with Gaussian kernel based weighting~\cite{nlm} or via nonlocal collaborative filtering in a transform domain~\cite{pnas1, bm3d}. Impressively, the nonlocal methods usually outperform the local methods, as images usually consist of repeated features or patterns that can be leveraged over a field of view to recover signals coherently.
Over the past several years, the area of image denoising has been dominated by deep convolutional neural networks (CNNs) \cite{7839189}. Different from the traditional methods that directly denoise an image based on an explicit model, the deep learning approach optimizes a deep neural network using training data, and then uses the trained model to predict a denoised image, usually achieving better results with much less inference time than the traditional denoising methods.

The mainstream deep denoising methods \cite{TIAN2020251} require paired and registered noise-clean images to train the networks, denoted as Noise2Clean.
However, such noise-clean image pairs can be hardly obtained in real-world applications.
To relax the requirement of paired noise-clean samples, Noise2Noise~\cite{n2n} was proposed to train a denoising network with paired noise-noise images that share the same content but are instantiated with independent noises.
It has been proved that Noise2Noise-based training can optimize a network to approach the Noise2Clean quality under the assumption of zero-mean noise.
However, the collection of paired noisy images is still expensive in many scenarios.
To address this challenge, extensive efforts~\cite{n2v, noise2self, noise2same, fcomp, snt, NIPS2019_8920, 9098336, 8579082, s2s} were made to develop unsupervised learning methods that train a denoising network with single noisy images.
Along this direction, Noise2Void~\cite{n2v} and Noise2Self~\cite{noise2self} were proposed to predict each center pixel from its local neighbors, achieving promising results using single noisy images under the assumption of {\it independent} noises among neighbor pixels.
However, these unsupervised deep denoising methods cannot suppress correlated or structured noises. Since most noises, especially CT image noises, are correlated, a general unsupervised denoising approach is highly desirable to unleash the power of deep learning for effective suppression of both independent and correlated noises, which is the holy grail of denoising methods.

Symmetry including similarity is ubiquitous in the physical world, naturally reflected in the image domain, featured by repeated or recursively embedded structures, edges, textures, etc., and plays an important role in modern science, engineering and medicine \cite{selfsimilarity}.
However, there is currently no general approach to utilize similar features in images for deep learning-based denoising.
As the first effort to utilize similarity for unsupervised deep denoising, the initial version of our Noise2Sim method was shared on arXiv~\cite{noise2sim}. Somehow similar methods were heuristically designed for the same purpose~\cite{noise2context}.
Here we present our Noise2Sim approach as the general framework for similarity-based optimization of a deep denoising network using intrinsically-registered sub-images.
Importantly, the theorem is proved that under mild conditions the deep denoising network trained with Noise2Sim is asymptotically equivalent to that trained in the supervised learning mode.
Given the equivalency between Noise2Clean and Noise2Sim, the proposed unsupervised learning approach can, in principle, suppress both independent and correlated noises as effectively as the supervised deep denoising methods \cite{shanldct, nmdenoising}.

The emphasis of this paper is on the general unsupervised deep denoising approach and its practical application in LDCT and PCCT imaging where image noises are correlated and practically paired samples are usually not available.
Extensive experiments on 3D LDCT images and 4D PCCT images consistently show the superiority of Noise2Sim over both the traditional denoising methods and the state-of-the-art deep denoising networks.
Furthermore, we systematically analyze the effectiveness of Noise2Sim on denoising natural images with independent noises in section S1 the supplementary material.
A recent study has shown that Noise2Sim can be successfully applied to other domains as well \cite{noise2sim_seismic}.
The source code, pre-trained models, and all data used in this study have been made publicly available on the project page \url{http://chuangniu.info/projects/noise2im/}, where the Python package "noise2sim" can be easily installed through Pip and source.

\section*{Results}

\noindent \textbf{Superiority/competitiveness of Noise2Sim relative to supervised deep LDCT denoising.}
Given the well-known successes of deep imaging \cite{deepimaging}, deep learning-based methods recently became dominating in denoising LDCT images \cite{redcnn}. 
In particular, Shan \emph{et al.} demonstrated that the deep learning approach performs better than or comparable to the commercial iterative reconstruction methods in a double blind reader study \cite{shanldct}.
These methods work mainly in the supervised learning mode, assuming that the paired yet registered low-dose and normal-dose CT (NDCT) images are available.
However, supervised LDCT image denoising requires either simulated noise insertion which does not reflect all the technical factors of a particular LDCT scan or the repeated scans of the same patient which are impractical in a clinical setting.
To address this challenge, Noise2Sim can be applied to denoise LDCT images directly, without any clean or noisy labels.

We first evaluated Noise2Sim on the commonly used Mayo clinical dataset containing LDCT and NDCT image pairs, demonstrating that Noise2Sim without using any annotation is comparable to and sometimes even better than supervised learning methods.
Popular supervised deep LDCT denoising methods RED-CNN~\cite{redcnn} and MAP-NN~\cite{shanldct} were selected for comparison.
Noise2Clean that is the plain supervised version of Noise2Sim serves as the baseline.
Also, we selected BM3D \cite{bm3d} and Nosie2Void \cite{n2v} for comparison, which are the most popular traditional denoising method and the representative unsupervised deep denoising method respectively.
Note that BM3D has the best performance among the traditional denoising methods as demonstrated in \cite{redcnn} for LDCT denoising.
Both visual results and quantitative Structural Similarity Index (SSIM) and Peak signal-to-noise ratio (PSNR) results were used to evaluate the denoising performance.

\begin{table}[htp]
\scriptsize
  \renewcommand{\arraystretch}{1.5}
  \renewcommand\tabcolsep{4.5pt}
 \caption{Quantitative results of different LDCT denoising methods on the Mayo dataset, in terms of mean and standard deviation values over the test dataset (1136 slices). N2Sim- means dissimilar pixels are excluded during training, N2Sim-L1 means using L1 loss, and N2Sim* means directly training the model on the testing dataset.}
  \centering
  \begin{tabular}{c|ccc|cccccc}
                  & \multicolumn{3}{c|}{Supervised learning} & \multicolumn{6}{c}{Unsupervised learning}  \\ 
    \hline
    Methods       & RED-CNN & MAP-NN & N2Clean &  N2Void & BM3D & N2Sim- & N2Sim & N2sim-L1 & N2Sim*\\
     \midrule
    PSNR          & 28.58 \tiny{$\pm$ 1.54} & 28.28 \tiny{$\pm$ 1.55} & \textbf{28.78} \tiny{$\pm$ 1.58} & 23.36 \tiny{$\pm$ 1.82} & 27.28 \tiny{$\pm$ 1.48} & 27.68 \tiny{$\pm$ 1.31} & 28.30 \tiny{$\pm$ 1.48} & \textbf{28.38} \tiny{$\pm$ 1.51} & 28.33 \tiny{$\pm$ 1.50} \\
    SSIM          & \textbf{90.30} \tiny{$\pm$ 2.92} & 90.13 \tiny{$\pm$ 2.92} & 90.21 \tiny{$\pm$ 2.91} & 83.99 \tiny{$\pm$ 4.33} & 88.30 \tiny{$\pm$ 3.02} & 89.73 \tiny{$\pm$ 2.73} & 90.33 \tiny{$\pm$ 2.78} & \textbf{90.45} \tiny{$\pm$ 2.76} & 90.37 \tiny{$\pm$ 2.89} \\
    \bottomrule
  \end{tabular}
  \label{tab:ldct}
\end{table}

The quantitative results in Table~\ref{tab:ldct} show that Noise2Sim achieves PSNR measures comparable to and better SSIM values than the supervised learning methods.
Indeed, our paired t-Test indicates that the \emph{p} values are less than 0.001 for the zero average difference hypothesis; that is, the mean results from different methods are significantly different.
Therefore, the mean results can be reliably used to compare different methods.
The above results suggest that even if the paired training samples are not available, deep neural networks can sill be optimized and applied to improve LDCT quality up to the state-of-the-art supervised performance.
Additionally, we evaluated the effects of three factors on Noise2Sim, including dissimilar pixels, loss functions, and training sets, which are detailed in Methods. The default learning setting, denoted by Noise2Sim, identifies and excludes dissimilar pixels in training samples, uses the mean squared error (MSE) loss function, and performs training and testing on separate datasets. In Table \ref{tab:ldct}, we tested three variants of Noise2Sim, i.e., Noise2Sim- that includes dissimilar pixels, Noise2Sim-L1 that uses the L1 loss function, and Noise2Sim* that trains the denoising model using the LDCT images in the testing dataset. The denoising performance of Noise2Sim- is evidently worse than Noise2Sim as the zero-mean conditional discrepancy condition required by the Nosie2Sim theorem is violated if the dissimilar pixels are not excluded. On the Mayo dataset, the L1 loss works better than the MSE loss, and the difference between the L1 and L2 losses is further analyzed in Methods.
More interestingly, the model trained using a small set of testing images performs slightly better than that trained with a much larger training set, which means that we do not need to collect a large number of extra training samples.
In this case, however, the time required for denoising includes both the training and testing models, and thus is much longer than that needed to directly apply the model pre-trained on the training dataset.
Either training strategy for Noise2Sim can be selected according to the application scenarios.

\begin{figure}[h]
    \centering
    \includegraphics[width=0.99\textwidth]{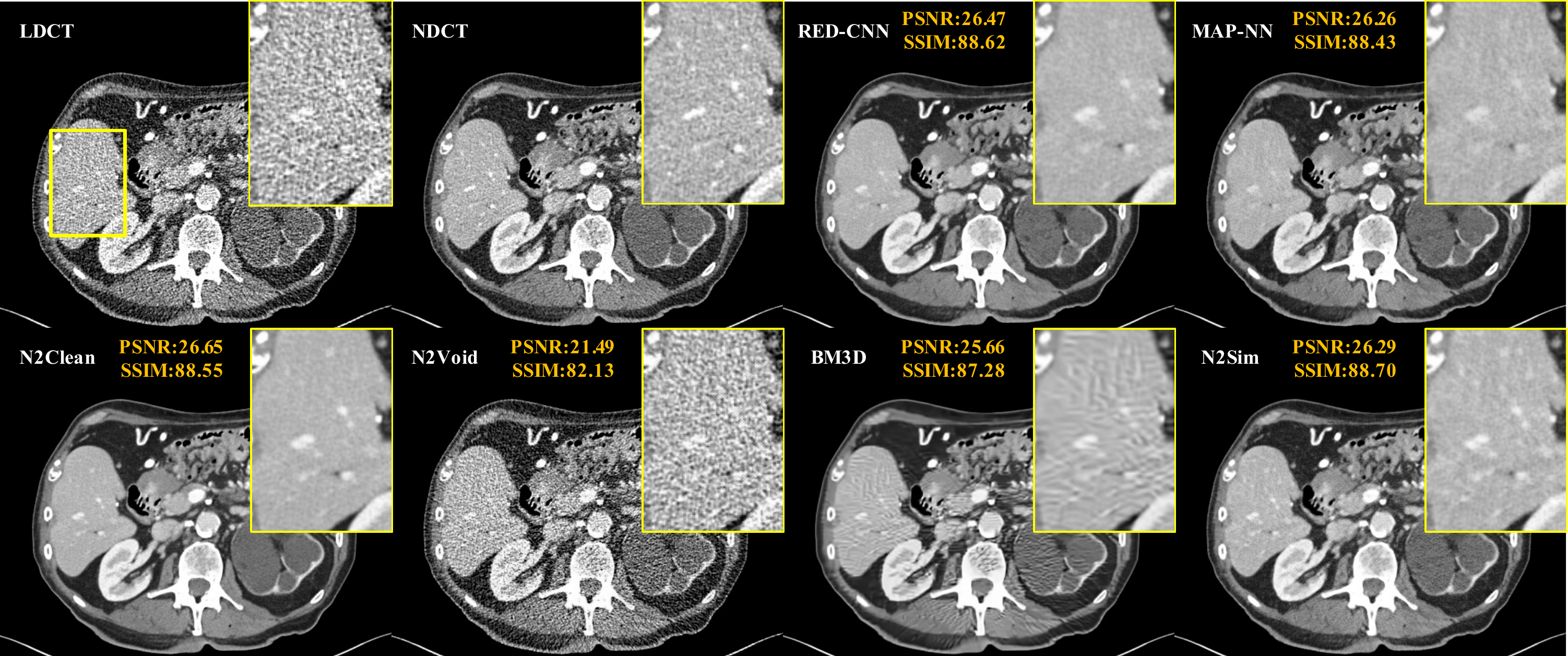}
    \caption{Superior/comparative Noise2Sim results on the Mayo LDCT dataset.  The yellow ROIs indicate that detail structures are better preserved using Noise2Sim than other methods in reference to the normal-dose CT image. The display window is [-160, 240] in Hounsfield unit~(HU), along with PSNR and SSIM values.}
    \label{fig:ldctresults}
\end{figure}

The visual results in Fig.~\ref{fig:ldctresults} also illustrate that Noise2Sim is better than the supervised methods in preserving structural details, as indicated by the zoomed ROIs in the yellow bounding boxes, in reference to the normal-dose CT image. Clearly, the supervised methods tend to remove noises aggressively while Noise2Sim tries to preserve informative details.
Our results show that Noise2Void does not work for structured CT noises, and the BM3D results are either over-smoothed or compromised with structured artifacts.
In contrast, Noise2Sim has significantly better performance in reducing structured noises and preserving content structures.

\begin{figure*}[h]
    \centering
    \includegraphics[width=0.99\textwidth]{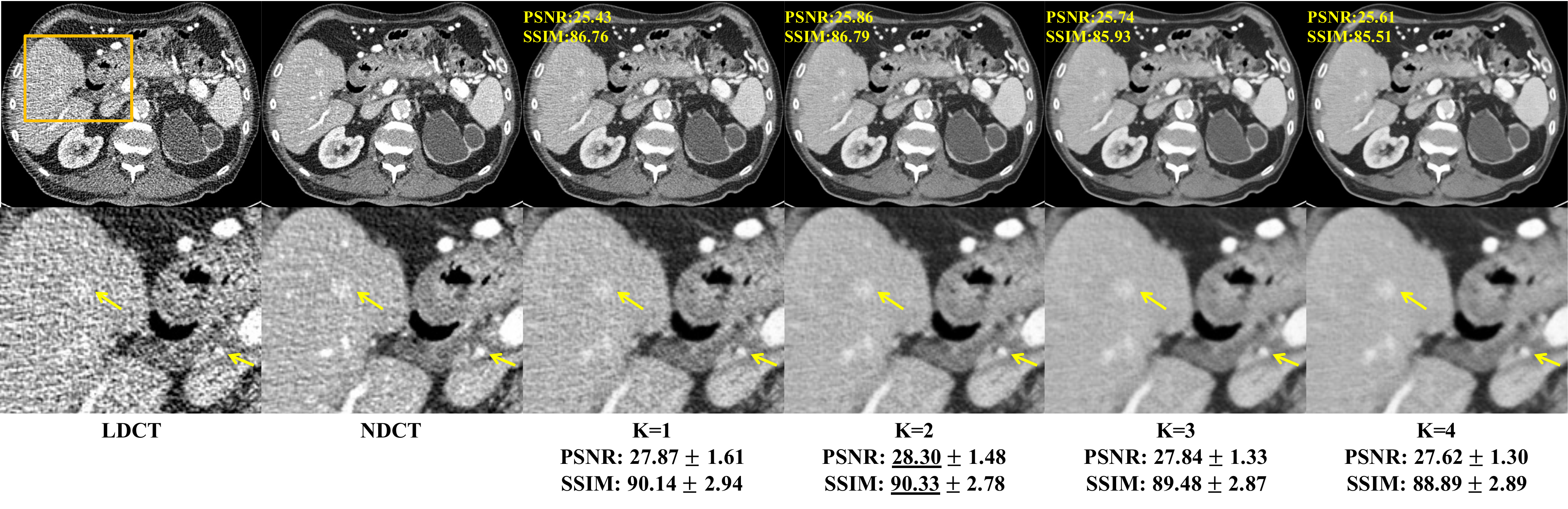}
    \caption{Controllable Noise2Sim results with different denoising levels on the Mayo dataset. The numbers in yellow and black are calculated for the individual case. The numbers in black are respectively the mean and standard deviation values over the test dataset (1136 slices). The second row shows the corresponding ROIs in the yellow bounding boxes. The yellow arrows indicate that structural details are enhanced via image denoising.}
    \label{fig:ldctresults-k}
\end{figure*}

\noindent \textbf{Controllability of Noise2Sim with different denoising levels keyed to a single parameter.}
The similarity parameter $k$ defined in the Methods section allows us to control the extent to which image noise is removed.
As shown in Fig \ref{fig:ldctresults-k}, more noise can be suppressed with a larger parameter $k$.
A larger $k$ increases the noise independence of similar training samples so that noise can be suppressed more aggressively, as implied by the Noise2Sim theorem.
On the other hand, the denoising results may be harmed if $k$ is too large, as the zero-mean conditional discrepancy assumed by the Noise2Sim theorem may be compromised. 
The statistical results in Fig \ref{fig:ldctresults-k} shows that $k=2$ achieves the best trade-off between noise independence and feature similarity.
The parameter $k$ can be adjusted according to specific down-stream tasks. If image quality can be quantitatively modeled, such as with a neural network and/or a Gram matrix~\cite{gram}, $k$ could be automatically optimized.
More practically, several levels of denoising images can be simultaneously presented to radiologists, and thus the best image quality can be determined with the human expertise in loop.

\begin{table}[htp]
\scriptsize
  \renewcommand{\arraystretch}{1.5}
  \renewcommand\tabcolsep{4.5pt}
 \caption{Generalizability results of different denoising methods on FDA datasets in terms of PSNR and SSIM, with both mean and standard deviation values over the test dataset (408 slices).}
  \centering
  \begin{tabular}{cc|ccc|ccc}
    \multicolumn{2}{c|}{\multirow{2}{*}{Method}}     &         \multicolumn{3}{c|}{Supervised learning} & \multicolumn{3}{c}{Unsupervised learning}     \\ 
    \cline{3-8}
    \multicolumn{2}{c|}{}       & RED-CNN & MAP-NN & Noise2Clean &  Noise2Void & BM3D & Noise2Sim*\\
    \midrule
    \multirow{2}{*}{\emph{b40f}} & PSNR          & 30.37 \tiny{$\pm$ 2.46} & 30.19 \tiny{$\pm$ 2.30} & 30.44 \tiny{$\pm$ 2.34} & 23.43 \tiny{$\pm$ 2.76} & 27.28  \tiny{$\pm$ 1.28}  & \textbf{30.52} \tiny{$\pm$ 2.12} \\
                                 & SSIM          & 91.70 \tiny{$\pm$ 3.95} & 91.28 \tiny{$\pm$ 4.11} & 91.66 \tiny{$\pm$ 3.88} & 83.10 \tiny{$\pm$ 5.22} & 88.31  \tiny{$\pm$ 3.02} & \textbf{92.03} \tiny{$\pm$ 3.79} \\
    \midrule
    \multirow{2}{*}{\emph{b60f}} & PSNR          & 19.59 \tiny{$\pm$ 2.43} & 22.99 \tiny{$\pm$ 0.81} & 17.71 \tiny{$\pm$ 1.75} & 17.05 \tiny{$\pm$ 1.02} & 21.21 \tiny{$\pm$ 2.01}  & \textbf{26.46} \tiny{$\pm$ 0.78}\\
                                 & SSIM          & 83.36 \tiny{$\pm$ 4.16} & 81.79 \tiny{$\pm$ 4.79} & 82.20 \tiny{$\pm$ 0.03} & 72.42 \tiny{$\pm$ 4.11} & 80.53 \tiny{$\pm$ 4.95}  & \textbf{91.80} \tiny{$\pm$ 1.28}\\
    \bottomrule
  \end{tabular}
  \label{tab:fda}
\end{table}

\begin{figure}[h]
    \centering
    \includegraphics[width=0.99\textwidth]{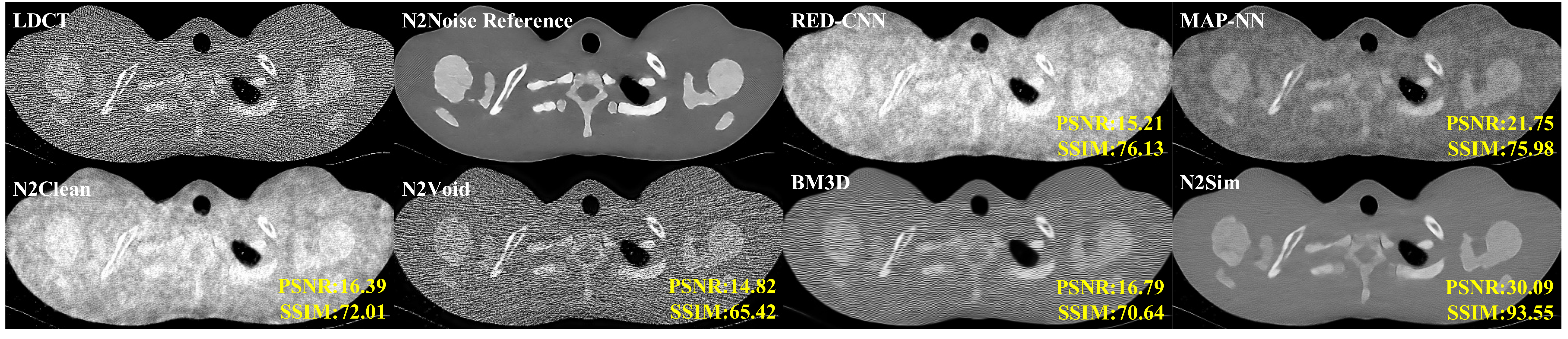}
    \caption{Generalizability results of Noise2Sim on the FDA dataset, significantly better than those from other denoising methods. The LDCT image was obtained with 200mA and \emph{b40f} kernel. The N2Noise reference image was obtained with 200mA and processed with the paired Noise2Noise method. The display window is [-160, 240] HU, along with PSNR and SSIM values.}
    \label{fig:ldctresults-real}
\end{figure}

\noindent \textbf{Generalizability of Noise2Sim on different LDCT datasets.}
We further evaluated Noise2Sim on real LDCT scans of the FDA anthropomorphic phantom \cite{fda}, demonstrating that Noise2Sim performs better than the supervised learning methods that transfer from the simulated to real data.
Since we can hardly obtain paired LDCT and NDCT images in the clinical scenario, we applied the supervised learning models that trained on the Mayo dataset to processing the real LDCT images in the FDA dataset.
For this purpose, Noise2Sim was trained and tested using the same LDCT images in the FDA dataset.
Specifically, we used two low-dose (25mA) phantom scans that were reconstructed with different kernels denoted by \emph{b40f} and \emph{b60f}, resulting in different noise patterns respectively.
To quantify the denoising performance, we used the corresponding normal-dose (200mA) phantom scans as targets. Because the normal-dose images reconstructed with the \emph{b40f} kernel still contain noises, we first processed the normal-dose images by training the Noise2Noise model with the paired noisy images and used the denoised noraml-dose image as the reference.
Table~\ref{tab:fda} shows that Noise2Sim achieves the best results on both two phantoms in terms of PSNR and SSIM among all the supervised and unsupervised learning methods.
Particularly, the supervised learning models cannot work well when being transferred to denoising real LDCT images that are reconstructed with a totally different kernel \emph{b40f}.
The \emph{b40f} phantom images denoised using different methods is visualized in Fig. \ref{fig:ldctresults-real}.
These results show that supervised learning models suffer from severe generalizability issues when there is a shift in distributions between training and testing samples.
In contrast, Noise2Sim can be directly optimized in the target domain without suffering from the generalizability issue.
Surprisingly, Noise2Sim can recover  underlying structures despite strong noises, while these structures can be hardly seen in the original LDCT images.
These strongly demonstrate the effectiveness of the proposed unsupervised deep denoising approach, which has a great potential to improve the LDCT image quality and diagnostic performance.

\begin{figure}[h]
    \centering
    \includegraphics[width=0.99\textwidth]{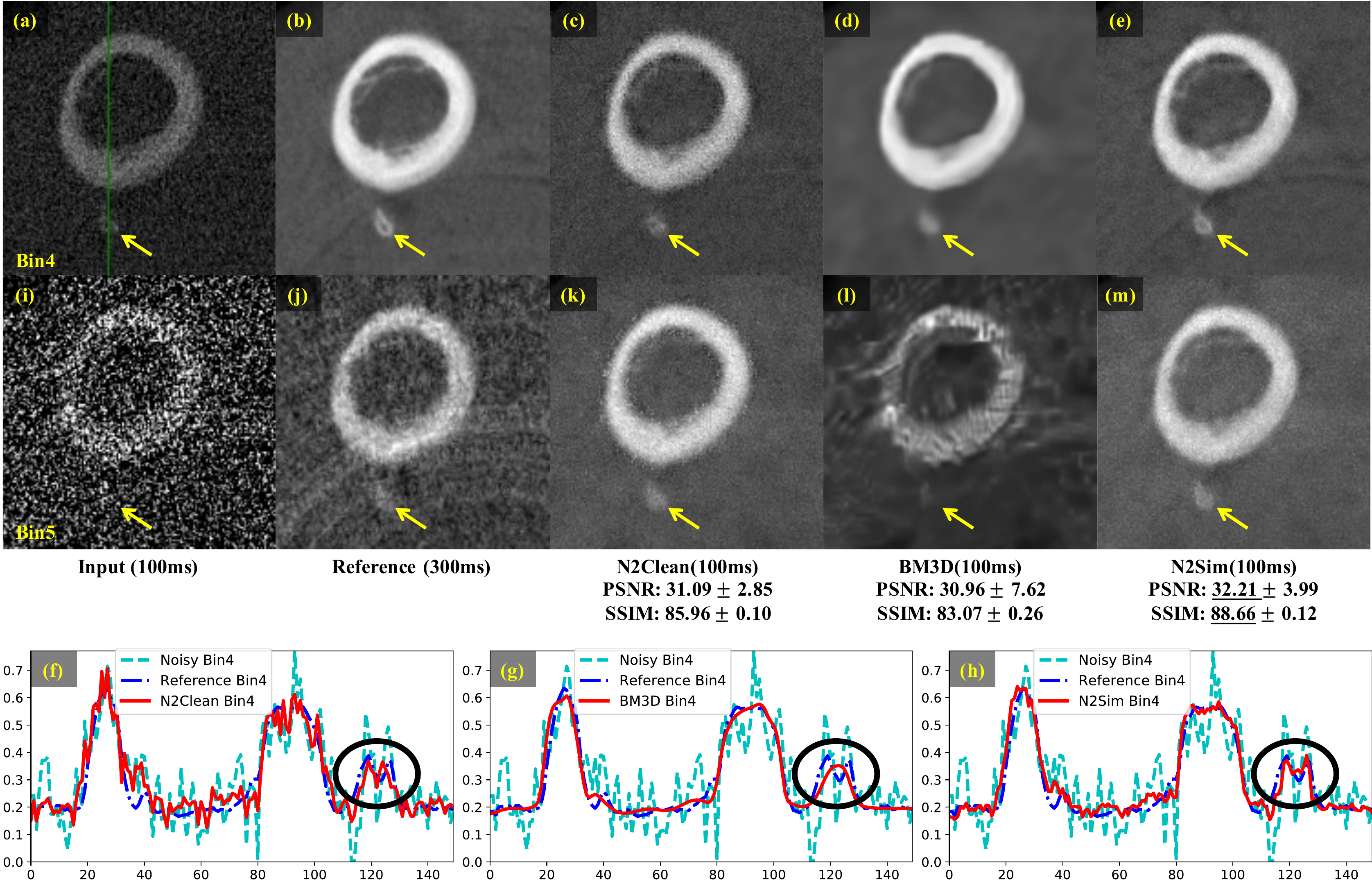}
    \caption{Qualitative and quantitative results of Noise2Clean, BM3D, and Noise2Sim on 4D photon-counting CT images. (a)-(e) and (d)-(m) respectively show denoising results from the $4^{th}$ and $5^{th}$ energy bins. (f)-(h) show the corresponding profiles along the green line in (a). The PSNR and SSIM results over the test dataset are reported for each method including both mean and standard deviation values over the test dataset (40 slices), and the best mean results are underlined. Yellow arrows indicate a small bone structure, with the counterparts in (f)-(h) emphasized within the black circles.}
    \label{fig:pcct-leg}
\end{figure}

\noindent \textbf{PCCT denoising with Noise2Sim recovering features faithfully in multiple channels.}
As an emerging CT imaging technology, an x-ray photon-counting detector records individual x-ray photons as well as their energy levels. By the nature of the photon-counting mechanism, this technique is free of electronic noise and provides a much higher resolution compared to that of conventional energy integrating detectors~\cite{li2020clinical}. The x-ray energy information can be used to extract chemically specific information and facilitate beam hardening correction, metal artifact reduction, tissue characterization and K-edge imaging~\cite{WU2021}. On the other hand, the refinement of spatial resolution and the division of photons into narrow energy bins significantly raise the image noise in individual energy channels, demanding powerful denoising methods that remove image noise while preserving fine features.

To evaluate the effectiveness of Noise2Sim on denoising PCCT images, we scanned a chicken drumstick at low- and normal-dose setting respectively and reconstructed them into PCCT image volumes using the state-of-the-art MARS spectral CT system.
In the experiments, the exposure time was controlled for low-dose (100 ms) and normal-dose (300 ms) images with 5 energy bins being of 7-20, 20-30, 30-47, 47-73 and $>$73keV, respectively.
Although the low-dose and normal-dose PCCT images of the same object are available, it is still hard to directly apply supervised learning methods for PCCT image denoising since different scans were not in perfect registration.
Thus, we first used a registration tool to align the low-dose and normal-dose PCCT volumes so that the supervised leaning method (Noise2Clean) can be applied. Then, the quantitative results were calculated for comparison.
For this purpose, we still selected the unsupervised BM3D method as the baseline.
Fig. \ref{fig:pcct-leg} shows that our proposed unsupervised leaning method outperforms BM3D and even the supervised learning method in terms of both the quantitative and qualitative results.
BM3D requires manually tuning the prior parameter of standard deviation (std) for each bin to achieve decent results. Despite these tedious tuning steps, BM3D still produced either over-smoothed features (the $4^{st}$ bin with the std of 0.2) or structured artifacts (the $5^{th}$ bin with std of 0.5).
The reason seems that BM3D could either blindly remove high-frequency components including some fine structures in the transform domain according to the patch similarity to address a large standard deviation prior or preserve some large structured noises to reflect a small standard deviation prior.
On the other hand, due to the imperfect alignment of low-dose and normal-dose training samples, the performance of a supervised learning method may be degraded.
In contrast, Noise2Sim intrinsically registers similar training samples and can be then applied to produce excellent denoising results without tuning the hyperparameter in a bin-specific fashion.
Particularly, Noise2Sim is able to faithfully recover the underlying structures interfered by strong noises, as shown in the last row of Fig. \ref{fig:pcct-leg}, which is visually even better than the reference image.

\begin{figure}[h]
    \centering
    \includegraphics[width=0.95\textwidth]{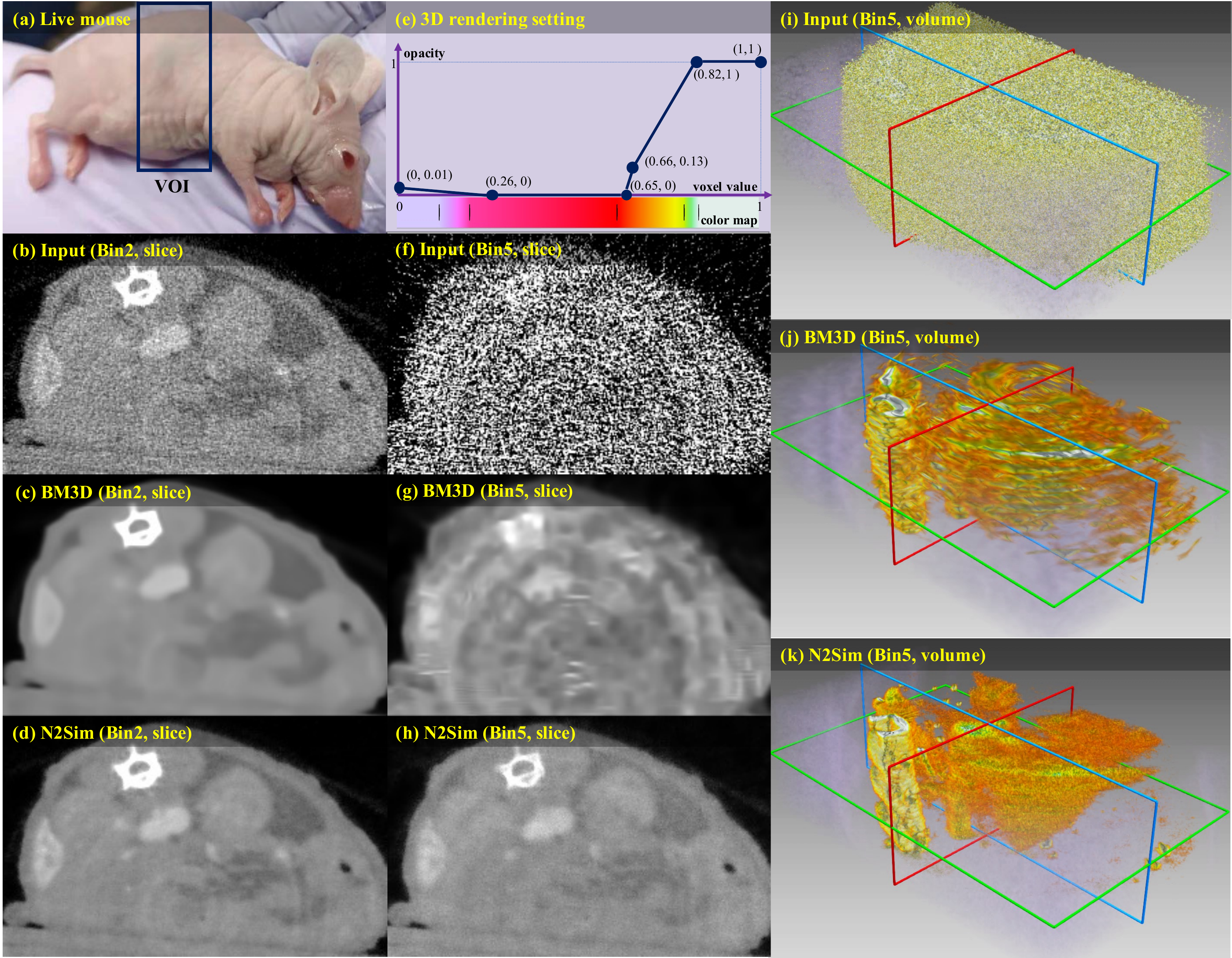}
    \caption{Visualization of the denoised PCCT images of a live mouse.
    (a) shows the live mouse with a boxed area to be scanned.
    (b), (c), and (d) are respectively the input, BM3D and Noise2Sim results of a slice in the $2^{th}$ bin.
    (f), (g), and (h) are the input, BM3D and Noise2Sim results of a slice in the $5^{th}$ bin.
    (i), (j), (k) are the 3D rendered images in the $5^{th}$ bin, where the rendering function of voxel value v.s. opacity and voxel value v.s. color map are given in (e).}
    \label{fig:pcct-mouse}
\end{figure}

Finally, we applied Noise2Sim to a PCCT image volume of a live mouse scanned on the MARS spectral CT system. In this case, only a single normal-dose scan was available so that the supervised leaning method cannot be applied.
Fig. \ref{fig:pcct-mouse} shows this challenging case in the $5^{th}$ bin, where structures can be hardly observed in the original noisy image.
BM3D can only recover a rough contour of the spine.
In contrast, Noise2Sim can accurately recover the spine and other organs.
The details on all the above datasets are described in section S4 the supplementary material, which can be publicly downloaded for reproducible and further research.

\section*{Discussion}

In the Noise2Sim method, the key is to find similarity-based training samples satisfying the ZCN and ZCD conditions (see the Methods section) from noisy data without any labels. Specific to the dimensionality of involved images, several efficient searching algorithms have been evaluated for suppressing noises in this study. Since here we focus on unsupervised learning, the commonly used UNet \cite{unet} architecture and the Euclidean distance were applied for similarity measurement. Clearly, the denoising performance should be better with a more advanced network design, more accurate similarity measurement, and a more sophisticated loss function.

As a general unsupervised denoising approach, Noise2Sim can be adapted to many other domains, not limited to CT images. For example, the utility of Noise2Sim is demonstrated with natural images (see section S1 in the Supplementary Material). In the general spirit of Noise2Sim, deeper analysis of domain-specific data can help fully leverage similar data features and achieve superior performance. As far as LDCT denoising is concerned, the imaging performance may be improved using a dual domain denoising network with similarity matches performed in the sinogram and image domains synergistically. Moreover, due to its simplicity and efficiency, this approach can be incorporated into other frameworks as an element or a constraint to regularize the CT image reconstruction process. 

In conclusion, we have presented a novel similarity-based unsupervised denoising approach only using noisy images with neither clean nor noisy paired labels. To our best knowledge, our proposed denoising approach is the first-of-its-kind to suppress structured noises in the self-learning fashion. Theoretically, we have proved the Noise2Sim theorem that the similarity-based unsupervised method is equivalent to the supervised learning methods under mild conditions (see the Methods section). Also, we have applied the Noise2Sim approach in the important applications to denoise 2D natural images, 3D low-dose CT images and 4D photon-counting micro-CT images. Our experimental results have demonstrated the superiority of the Noise2Sim approach in comparison with existing model-based and deep learning denoising methods. Potentially, the Noise2Sim approach can be adapted to various domains for practical denoising performance.

\section*{Methods}

A noisy image $\x_i$ can be decomposed into two parts: $\x_i = \s_i + \n_i$, which is generated from the joint distribution $p(\s,\n) = p(\s)p(\n|\s)$, where $\s_i$ and $\n_i$ are the clean signal and associated noise respectively. A deep denoising method learns a network function to recover the clean signal $\s_i$ from the noisy signal $\x_i$, i.e., $\y_i = f(\x_i; \bm{\theta})$, where $f$ denotes the network function with a vector of parameters $\bm{\theta}$ to be optimized.
In a supervised training process (Noise2Clean), each noisy image $\bm{x}_i$ is associated with the corresponding clean image $\bm{s}_i$ as the target. 
Let $\bm{\theta}_c$ be the network parameters optimized with paired noise-clean data, we have

\begin{equation}
\label{eq_loss_c}
\bm{\theta}_c = \arg\min_{\bm{\theta}} \frac{1}{N_c} \sum_{i=1}^{N_c} || f(\s_i + \n_i; \bm{\theta}) - \s_i||_2^2,
\end{equation}
where $N_c$ is the number of images, and the mean squared error (MSE) is used as the loss function~\cite{n2n, n2v}.
Among all denoising methods, the supervised deep denoising methods currently achieve the best results but they are handicapped when paired images are unavailable.

The ubiquitous similarity in the physical world leads to the well-known popularity of similar image pixels, patches, slices, volumes and tensors embedded within and across images, which are conveniently referred to as similar sub-images in this study.
Based on this fact, we propose Noise2Sim, a general unsupervised denoising approach for training a deep network without collecting paired noisy or clean target data.
The key idea behind Noise2Sim is to replace explicit clean or noisy targets with the similar targets for training the denoising network, such that noises are suppressed to enhance signals faithfully.
Specifically, given noisy images only, we first construct a set of similar sub-images, denoted by $\bm{x}_i = \bm{s}_i + \bm{n}_i$ and $\hat{\bm{x}}_i = \bm{s}_i + \bm{\delta}_i + \hat{\bm{n}}_i$, where $\bm{\delta_i}$ is the difference between the clean signal components in similar sub-images, and $\bm{n}_i$ and $\hat{\bm{n}}_i$ are two different noise realizations. 
Let $\bm{\theta}_s$ be the vector of network parameters to be optimized with the constructed similar pairs of data. We can find $\bm{\theta}_s$ by minimizing the following loss function: 
\begin{equation}
    \label{eq_loss_s}
    \bm{\theta}_s = \arg\min_{\bm{\theta}} \frac{1}{N_s} \sum_{i=1}^{N_s} || f(\s_i + \n_i; \bm{\theta}) - (\s_i + \bm{\delta}_{i} + \hat{\bm{n}}_i)||_2^2,
\end{equation}
where $N_s$ denotes the number of noisy similar image pairs.
First of all, We present the following theorem to justify our Noise2Sim approach:
\begin{theorem}
\label{theo_1}
Given the zero conditional expectations $\mathbb{E}[\hat{\bm{n}}_{i}|\bm{s}_i + \bm{n}_{i}] = \bm{0}$ and $\mathbb{E}[\bm{\delta}_{i} | \bm{s}_i + \bm{n}_i] = \bm{0}, \forall i$, we have $\lim_{N_s \rightarrow \infty} \bm{\theta}_s = \bm{\theta}_c$.
\end{theorem}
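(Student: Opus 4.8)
\section*{Proof proposal}

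The plan is to run the population-risk decomposition that underlies the Noise2Noise identity, adapted to the conditional-mean hypotheses stated here, and then to upgrade it to convergence of minimizers. First I would pass from the empirical objective in~\eqref{eq_loss_s} to its population counterpart: assuming the similar pairs $(\s_i+\n_i,\ \s_i+\bm{\delta}_i+\hat{\bm{n}}_i)$ are i.i.d.\ draws from a fixed joint law, the strong law of large numbers gives, for each fixed $\bm{\theta}$,
\[
\frac{1}{N_s}\sum_{i=1}^{N_s}\big\|f(\s_i+\n_i;\bm{\theta})-(\s_i+\bm{\delta}_i+\hat{\bm{n}}_i)\big\|_2^2 \xrightarrow[N_s\to\infty]{} \mathbb{E}\big[\|f(\s+\n;\bm{\theta})-(\s+\bm{\delta}+\hat{\bm{n}})\|_2^2\big] =: R_s(\bm{\theta}),
\]
and likewise the Noise2Clean empirical risk converges pointwise to $R_c(\bm{\theta}):=\mathbb{E}\big[\|f(\s+\n;\bm{\theta})-\s\|_2^2\big]$.

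Second, I would show that $R_s$ and $R_c$ differ only by a $\bm{\theta}$-independent constant. Write $g_{\bm{\theta}}:=f(\s+\n;\bm{\theta})-\s$, which is a measurable function of the input $\x=\s+\n$ alone, and set $\bm{\epsilon}:=\bm{\delta}+\hat{\bm{n}}$. Expanding the square,
\[
R_s(\bm{\theta}) = \mathbb{E}\|g_{\bm{\theta}}\|_2^2 \;-\; 2\,\mathbb{E}\big[g_{\bm{\theta}}^{\!\top}\bm{\epsilon}\big] \;+\; \mathbb{E}\|\bm{\epsilon}\|_2^2 .
\]
The cross term vanishes: conditioning on $\x$, using that $g_{\bm{\theta}}$ is $\x$-measurable, and invoking the tower property together with the hypotheses $\mathbb{E}[\hat{\bm{n}}\mid\s+\n]=\bm{0}$ and $\mathbb{E}[\bm{\delta}\mid\s+\n]=\bm{0}$, we get $\mathbb{E}[g_{\bm{\theta}}^{\!\top}\bm{\epsilon}] = \mathbb{E}\big[g_{\bm{\theta}}^{\!\top}\,\mathbb{E}[\bm{\epsilon}\mid\x]\big]=0$. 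Hence $R_s(\bm{\theta}) = R_c(\bm{\theta}) + \mathbb{E}\|\bm{\epsilon}\|_2^2$, so $R_s$ and $R_c$ have the same set of minimizers; under mild conditions this minimizer is unique and I will call it $\bm{\theta}^\star$.

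Third, I would turn pointwise convergence of the risks into convergence of the minimizers — the standard consistency-of-M-estimators step. Assuming $\bm{\theta}$ ranges over a compact set, $f$ is continuous in $\bm{\theta}$ with an integrable envelope, and $\bm{\theta}^\star$ is the well-separated (unique) minimizer of $R_c$, a uniform law of large numbers gives $\sup_{\bm{\theta}}\big|\tfrac{1}{N_s}\sum_i\|\cdots\|_2^2 - R_s(\bm{\theta})\big|\to 0$ almost surely, and the usual argmin-continuity theorem then yields $\bm{\theta}_s\to\bm{\theta}^\star$. Running the same argument on the Noise2Clean risk identifies $\bm{\theta}_c$ (in its large-sample limit, or read as the population optimizer) with the same $\bm{\theta}^\star$, giving $\lim_{N_s\to\infty}\bm{\theta}_s=\bm{\theta}_c$.

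The main obstacle is this last step rather than the algebra. The identity $R_s=R_c+\mathrm{const}$ is immediate from the zero-conditional-mean assumptions, but promoting ``same population minimizer'' to ``empirical minimizers converge'' requires regularity — compactness of the parameter domain, continuity and envelope integrability for a uniform LLN, and especially well-separatedness of $\bm{\theta}^\star$ — conditions that over-parameterized neural networks do not literally satisfy. I would package these as the ``mild conditions'' of the statement and remark that non-uniqueness can be accommodated by proving convergence of the argmin \emph{sets} (in the Hausdorff sense) instead of of individual points.
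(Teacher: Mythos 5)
Your proposal is correct and follows essentially the same route as the paper: expand the squared loss, kill the cross terms between the network output and $\hat{\bm{n}}_i+\bm{\delta}_i$ via the tower property, the two zero-conditional-mean hypotheses, and the fact that $f(\s_i+\n_i;\bm{\theta})$ is a deterministic function of the conditioning variable, then conclude the two objectives differ by a $\bm{\theta}$-independent constant in the population limit. One small slip: $g_{\bm{\theta}}=f(\s+\n;\bm{\theta})-\s$ is \emph{not} measurable with respect to $\x=\s+\n$ alone, since $\s$ cannot be recovered from $\x$; the conditioning argument only annihilates $\mathbb{E}[f(\x;\bm{\theta})^{\top}\bm{\epsilon}]$, so your cross term actually equals $-\mathbb{E}[\s^{\top}\bm{\epsilon}]$ rather than zero. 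This does not damage the conclusion, because that leftover is $\bm{\theta}$-independent and is absorbed into the additive constant --- which is exactly how the paper proceeds, discarding all $\bm{\theta}$-free terms before taking any expectation. Where you genuinely go beyond the paper is your third step: the paper passes from convergence of the recentered empirical objective to convergence of the argmins without comment, whereas you correctly note that this requires a uniform law of large numbers over $\bm{\theta}$ (the averages $\frac{1}{N_s}\sum_i\hat{\bm{n}}_i^{\top}\y_i$ depend on $\bm{\theta}$ through $\y_i$), compactness, a well-separated minimizer, and reading $\bm{\theta}_c$ as a population optimizer; these are precisely the unstated ``mild conditions'' of the theorem.
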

\begin{proof}
Let $\y_i := f(\s_i + \n_i; \bm{\theta})$, expanding the loss function in Eq.~\ref{eq_loss_s} and removing the terms that do not involve $\bm{\theta}$, we have
\begin{equation}
    \begin{aligned}
    \label{eq_ns2}
    \arg\min_{\mathbf{\theta}}& \frac{1}{N_s} \sum_{i=1}^{N_s} ||\y_i - (\s_i + \hat{\bm{n}}_i + \boldsymbol{\delta}_i)||_2^2\\
    =  \arg\min_{\mathbf{\theta}}& \frac{1}{N_s} \sum_{i=1}^{N_s} \left(||\y_i - \s_i||_2^2 -  2{\hat{\bm{n}}_i}^T \y_i -  2\boldsymbol{\delta}_i^T \y_i \right).
    \end{aligned}
\end{equation}
Clearly, the second and third terms respectively become 
\begin{equation}
    \begin{aligned}
& \lim_{N_s \to \infty} \frac{1}{N_s}\sum_{i=1}^{N_s} 2{\hat{\bm{n}}_i}^T \y_i = 2\mathbb{E}[{\hat{\bm{n}}_i}^T \y_i]\\
& \lim_{N_s \to \infty} \frac{1}{N_s}\sum_{i=1}^{N_s} 2\bm{\delta}_i^T \y_i = 2\mathbb{E}[\bm{\delta}_i^T \y_i].
    \end{aligned}
\end{equation}
Because 
\begin{equation}
    \begin{aligned}
& \mathbb{E}[{\hat{\bm{n}}_i}^T \y_i] = \mathbb{E}[\mathbb{E}[\hat{\bm{n}}_i |\y_i]^T\y_i] = \mathbb{E}[\mathbb{E}[\hat{\bm{n}}_i |\s_i + \n_i]^T\y_i] = 0\\
& \mathbb{E}[\bm{\delta}_i^T \y_i] = \mathbb{E}[\mathbb{E}[\bm{\delta}_i|\y_i]^T|\y_i] = \mathbb{E}[\mathbb{E}[\bm{\delta}_i|\x_i + \n_i]^T|\y_i] = 0,
    \end{aligned}
\end{equation}
where the second equivalence is due to the fact that $\y_i = f(\s_i + \n_i; \bm{\theta})$ is deterministic. Hence, the following equations hold true:
\begin{equation}
    \begin{aligned}
& \lim_{N_s\to \infty} \frac{1}{N_s}\sum_{i=1}^{N_s} 2{\hat{\bm{n}}_i}^T \y_i = 0
& \lim_{N_s \to \infty} \frac{1}{N_s}\sum_{i=1}^{N_s} 2\bm{\delta}_i^T \y_i = 0.
    \end{aligned}
\end{equation}
Finally, as $N_s \to \infty$,
\begin{equation}
\begin{aligned}
& \arg\min_{\mathbf{\theta}} \frac{1}{N_s} \sum_{i=1}^{N_s} ||\y_i - (\s_i + \hat{\bm{n}}_i + \bm{\delta}_i)||_2^2 \\
& = \arg\min_{\mathbf{\theta}} \frac{1}{N_c} \sum_{i=1}^{N_c} ||\y_i - \s_i||_2^2.
\end{aligned}
\end{equation}
That is, 
$\lim_{N_s\rightarrow\infty} \bm{\theta}_s = \bm{\theta}_c$
\end{proof}

\noindent \textbf{Zero-mean conditional noise.}
One condition for the Noise2Sim theorem is the zero-mean conditional noise (ZCN), $\mathbb{E}[\bm{n}'_{i}|\bm{s}_i + \bm{n}_{i}] = \bm{0}$.
This condition will be satisfied if similar sub-images have independent and zero-mean noises, i.e., $\mathbb{E}[\bm{n}'_{i}|\bm{s}_i + \bm{n}_{i}] = \mathbb{E}[\bm{n}'_{i}] = \bm{0}$.
As the direct current (DC) offsets in imaging systems are usually calibrated well, the expectation of observations is the real signal, meaning that the noise component has a zero mean.
If noises of all pixels are independent of each other, the ZCN condition is directly satisfied so that the independent noises can be suppressed with Noise2Sim.
In the case of correlated noises, if the distance between two sub-images is greater than the correlation length of noise, their noise components tend to be independent, as demonstrated in the CT applications.
Therefore, by learning between such nonlocal similar sub-images, it is feasible for the denoising network to perform well on correlated noises, based on the Noise2Sim theorem.
It is worth mentioning that there are no specific assumptions on the noise distribution. Hence, Noise2Sim can be adapted to process different noise distributions.

\noindent \textbf{Zero-mean conditional discrepancy.}
The other condition for the Noise2Sim theorem to be valid is the zero-mean conditional discrepancy (ZCD) $\mathbb{E}[\bm{\delta}_{i} | \bm{s}_i + \bm{n}_i] = \bm{0}$. 
In practice, although the ZCD condition cannot be exactly satisfied, a good approximation can be practically achieved when we search intrinsically similar sub-images, as experimentally demonstrated for natural images (see section S3 in Supplementary Material for details).

\noindent \textbf{Noise2Sim v.s. current deep denoising methods.}
Since the ZCN and ZCD conditions are practical, the unsupervised Noise2Sim learning can be regarded as a surrogate of the supervised learning without collecting paired training samples and yet with a wide range of applications.
During training, Noise2Sim searches and constructs similar sub-images globally in the entire image domain, while existing unsupervised methods~\cite{n2v, noise2self} construct training samples using local pixels.
Noise2Sim enjoys superiority over these unsupervised deep denoising methods as their training samples can be regarded as a subset of those for Noise2Sim, as analyzed in the section S2 of Supplementary Material.

\noindent \textbf{Noise2Sim v.s. traditional non-local mean methods.}
The traditional non-local mean (NLM) method independently processes each pixel in each image via weighted averaging non-local similar pixels.
In other words, only its own similar pixels contribute to suppressing noise at the reference pixel in a linear manner.
Noise2Sim is different from NLM in two aspects.
First, Noise2Sim optimizes a non-linear neural network function to suppress image noises nonlinearly, which is more powerful than linear averaging.
Second, Noise2Sim processes every pixels in each image using the pretrained neural network with the same weights, which is the outcome of all similar pixels of all training images.
Thanks to the high capacity of non-linear neural networks, all sets of similar pixels instead of a sing similar set contribute to recovering each specific pixel.

\noindent \textbf{Loss function.} In the Noise2Sim theorem, the equivalency between Noise2Sim and Noise2Clean is demonstrated under the MSE loss function.
The MSE loss function will drive the network to output the mean value of the noise distribution so that two zero-mean conditions are required.
We can also use the L1 loss function to optimize the network, which tends to predict the median value of the noise distribution.
If the median of the noise distribution is closer to zero, L1 loss function will achieve better results, as empirically demonstrated in the LDCT denoising experiments.

\begin{figure}[h]
    \centering
    \includegraphics[width=1.0\textwidth]{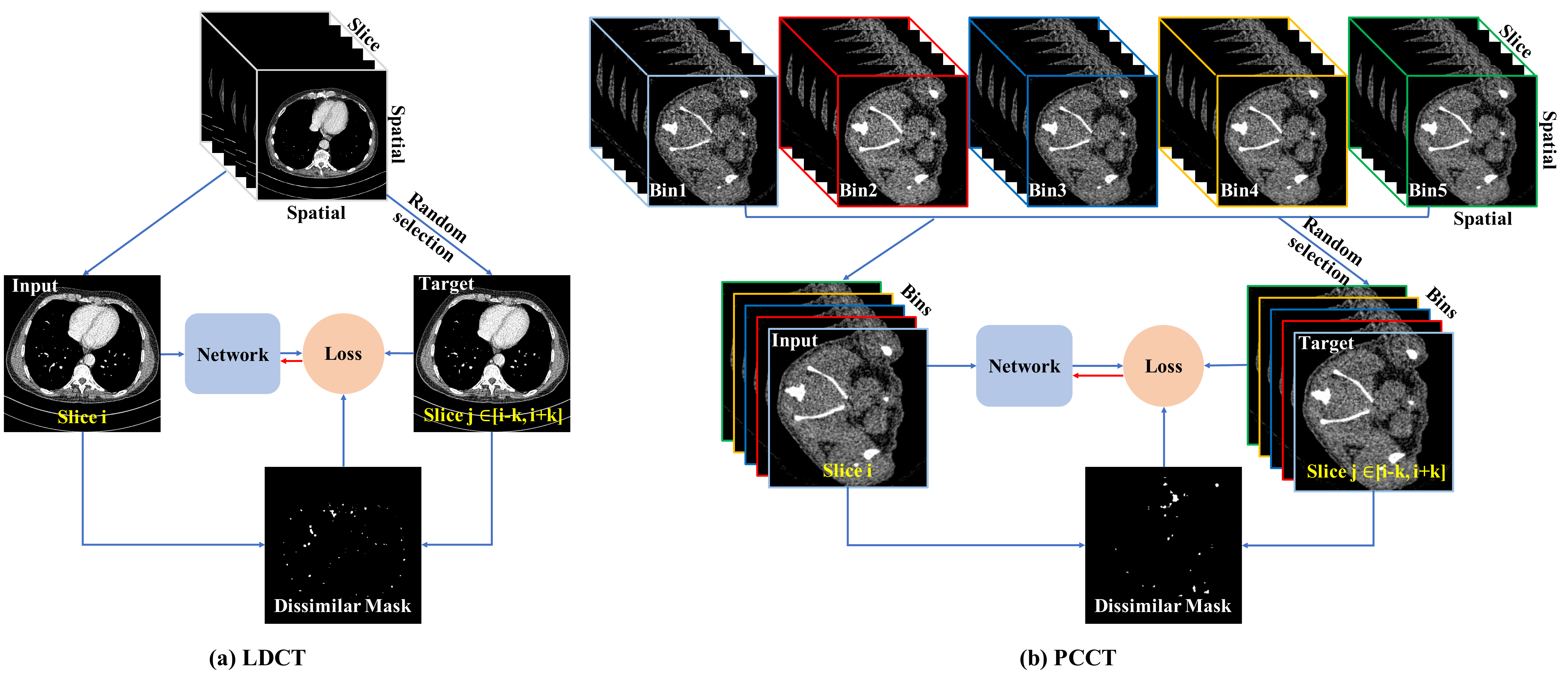}
    \caption{Noise2Sim training process on LDCT and PCCT. The similar volumes along the slice direction are selected to construct training samples, where the dissimilar vectors identified in a mask image are excluded during training.}
    \label{fig:ctsearch}
\end{figure}

\noindent \textbf{Noise2Sim training process on LDCT and PCCT.}
Without paired training samples, we can search for a set of similar sub-images from noisy images with the ZCN and ZCD conditions approximately satisfied.
In this study, similar sub-images refer to similar pixels/patches in 2D images, slices in 3D images, and volumes in 4D images.
Based on the Nosie2Sim theorem, the denoising network can be optimized with the constructed similar training samples in a self-learning manner asymptotically up to the supervised learning performance.
Generally, a similar training set can be defined as $\{(\bm{x}_i, \hat{\bm{x}}_i) | S(T(\bm{x}_i), T(\hat{\bm{x}}_i)) \}$, where $T$ is a transform of a sub-image, and $S$ is a metric to identify if two sub-images are similar or not. In practice, $T$ and $S$ may take different forms, depending on domain-specific priors.

It is well known that CT noise is spatially structured, which varies with the scanning protocol parameters, reconstruction steps, patient conditions, scanner hardware factors, etc.
Geometrically, each detector row mainly contributes to one or several consecutive slices, and different detector rows are subject to independent noise realizations.
As a result, there is a much weaker noise correlation between different slices (especially when the slices are well separated) than within the same slice.
On the other hand, the same organ or tissue in a patient usually has the similar Hounsfield unit (HU) values.
With these priors, similar slices with approximately independent and zero-mean noises can be obtained along the longitudinal axis of the patient.
As for a volumetric CT image, it consists of two in-plane dimensions and one through-slice dimension, as shown in Fig.~\ref{fig:ctsearch} (a).
Given the $i^{th}$ reference slice as input, a similar slice can be randomly selected from the $(i-k)^{th}$ to $(i+k)^{th}$ slices, where $k$ defines the searching range of similar slices, and used as the target during training.
Similarly, a PCCT image tensor is of four dimensions, including three spatial dimensions (in-plane and through slice), and a channel dimension, as shown in Fig.~\ref{fig:ctsearch} (b).
Thus, the $i^{th}$ slice for PCCT is a 3D tensor (two spatial and one spectral), and a similar tensor is also randomly selected from $(i-k)^{th}$ to $(i+k)^{th}$ slices.
However, it cannot be guaranteed that the pixels/vectors at the same location in different neighboring slices are always similar to each other, especially when the structures are longitudinally changed.
According to the Noise2Sim theorem, these dissimilar parts will compromise the zero-mean conditional discrepancy condition, and thus should be excluded from training samples.
Here propose to identify dissimilar pixels between similar slices, which are indicated by the dissimilar mask in Fig.~\ref{fig:ctsearch} and excluded in computing the loss.

Specifically, we denote a pair of similar LDCT or PCCT images as $\bm{x}_i, \bm{x}_j \in R^{H \times W \times C}$, where $H, W, C$ denote height, width, and channel of CT images, $C=1$ for LDCT images, and $i, j$ are the slice indices, $j \in [i-k, i+k]$. For each pair of vectors $\bm{x}_i(u, v, :), \bm{x}_j(u, v, :) \in R^C$ at the same spatial location $(u, v)$, we use their associated patches to determine their similarity. The patches of these two vectors share the same spatial coordinates $S(u, v)$ with the patch size $s \times s$. Formally, we define the distance map $\bm{d} \in R^{H \times W}$ between $\bm{x}_i$ and $\bm{x}_j$ as
\begin{equation}
    \bm{d}_{ij}(u, v) = \frac{1}{C}\sum_{c=1}^C\sqrt{ \left(\frac{1}{s^2}\sum_{(p, q) \in S(u, v)} (\bm{x}_i(p, q, c) - \bm{x}_j(p, q, c)) \right)^2}.
\end{equation}
In practice, the inner summation can be computed by the convolution with the $s \times s$ kernel whose elements are ones.
Then, the dissimilar mask $\bm{m}_{ij}$ is computed as
\begin{equation}
  \bm{m}_{ij}(u, v) =
    \begin{cases}
      1 & \bm{d}_{ij}(u, v) > d_{th}\\
      0 & \text{otherwise}
    \end{cases}
\end{equation}
where $d_{th}$ is a predefined threshold. In all experiments, we empirically set the patch size $s=7$ and the threshold $d_{th}=30$ in HU. Finally, the loss function is
\begin{equation}
    L = \frac{1}{N_s}\sum_{i, j}||(f(\x_i; \bm{\theta}) - \x_j) \odot \bm{m}_{ij}||_2^2,
\end{equation}
where $\odot$ denotes the spatial-wise multiplication, and MSE function can be replaced by $L_1$ function. More training details can be found in section S6 of the supplementary material.

\subsection*{Data Availability}
The low-dose CT datasets from Mayo and FDA used in this study can be respectively obtained through \url{www.aapm.org/grandchallenge/lowdosect/} and \url{https://wiki.cancerimagingarchive.net/display/Public/Phantom+FDA}.
The natural image datasets including BSD68, BSD500, and Kodak used in section S1 of the Supplementary Material can be respectively obtained through \url{www.github.com/clausmichele/CBSD68-dataset}, \url{www.eecs.berkeley.edu/Research/Projects/CS/vision/bsds/}, and \url{www.cs.albany.edu/~xypan/research/snr/Kodak.html}.
All photon-counting datasets scanned in this study are publicly available on our project page \url{http://chuangniu.info/projects/noise2im/}.

\subsection*{Code Availability}
Our source codes for Noise2Sim are publicly available at \url{https://github.com/niuchuangnn/noise2sim}. Noise2Sim can be installed through Pip and source.


\begin{thebibliography}{10}

\bibitem{pcct}
S.~S. Hsieh, S.~Leng, K.~Rajendran, S.~Tao, C.~H. McCollough, {\it IEEE
  Transactions on Radiation and Plasma Medical Sciences\/} {\bf 5}, 441 (2021).

\bibitem{ldctrisk}
A.~Berrington~de González, {\it et~al.\/}, {\it Archives of Internal
  Medicine\/} {\bf 169}, 2071 (2009).

\bibitem{LINDENBAUM19941}
M.~Lindenbaum, M.~Fischer, A.~Bruckstein, {\it Pattern Recognition\/} {\bf 27},
  1  (1994).

\bibitem{56205}
P.~{Perona}, J.~{Malik}, {\it IEEE Transactions on Pattern Analysis and Machine
  Intelligence\/} {\bf 12}, 629 (1990).

\bibitem{2158083}
F.~Catté, P.-L. Lions, J.-M. Morel, T.~Coll, {\it SIAM Journal on Numerical
  Analysis\/} {\bf 29}, 182 (1992).

\bibitem{19870670317}
L.~P. Yaroslavsky, {\it Springer Verlag\/}  (1985).

\bibitem{Smith1997}
S.~M. Smith, J.~M. Brady, {\it Int. Journal of Computer Vision\/} {\bf 23}, 45
  (1997).

\bibitem{710815}
C.~{Tomasi}, R.~{Manduchi}, {\it ICCV\/} (1998), pp. 839--846.

\bibitem{dct}
G.~Yu, G.~Sapiro, {\it {Image Processing On Line}\/} {\bf 1}, 292 (2011).

\bibitem{nlm}
A.~{Buades}, B.~{Coll}, J.~. {Morel}, {\it CVPR\/} (2005), vol.~2, pp. 60--65
  vol. 2.

\bibitem{pnas1}
L.~Chiron, M.~A. van Agthoven, B.~Kieffer, C.~Rolando, M.-A. Delsuc, {\it
  Proceedings of the National Academy of Sciences\/} {\bf 111}, 1385 (2014).

\bibitem{bm3d}
K.~{Dabov}, A.~{Foi}, V.~{Katkovnik}, K.~{Egiazarian}, {\it IEEE Transactions
  on Image Processing\/} {\bf 16}, 2080 (2007).

\bibitem{7839189}
K.~{Zhang}, W.~{Zuo}, Y.~{Chen}, D.~{Meng}, L.~{Zhang}, {\it IEEE Transactions
  on Image Processing\/} {\bf 26}, 3142 (2017).

\bibitem{TIAN2020251}
C.~Tian, {\it et~al.\/}, {\it Neural Networks\/} {\bf 131}, 251 (2020).

\bibitem{n2n}
J.~Lehtinen, {\it et~al.\/}, {\it ICML\/} (2018), vol.~80, pp. 2965--2974.

\bibitem{n2v}
A.~{Krull}, T.~{Buchholz}, F.~{Jug}, {\it CVPR\/} (2019), pp. 2124--2132.

\bibitem{noise2self}
J.~Batson, L.~Royer, {\it ICML\/} (2019), vol.~97, pp. 524--533.

\bibitem{noise2same}
Y.~Xie, Z.~Wang, S.~Ji, {\it NeurIPS\/} (2020), vol.~33, pp. 20320--20330.

\bibitem{fcomp}
A.~Krull, T.~Vičar, M.~Prakash, M.~Lalit, F.~Jug, {\it Frontiers in Computer
  Science\/} {\bf 2}, 5 (2020).

\bibitem{snt}
A.~Khodja, Z.~Zheng, Y.~He, {\it Mathematics and Computer Science\/} {\bf 5},
  56 (2020).

\bibitem{NIPS2019_8920}
S.~Laine, T.~Karras, J.~Lehtinen, T.~Aila, {\it NeurIPS\/} (2019), pp.
  6970--6980.

\bibitem{9098336}
C.~{Broaddus}, A.~Krull, M.~Weigert, U.~Schmidt, G.~Myers, {\it International
  Symposium on Biomedical Imaging\/} (2020), pp. 159--163.

\bibitem{8579082}
V.~{Lempitsky}, A.~{Vedaldi}, D.~{Ulyanov}, {\it CVPR\/} (2018), pp.
  9446--9454.

\bibitem{s2s}
Y.~Quan, M.~Chen, T.~Pang, H.~Ji, {\it CVPR\/} (2020).

\bibitem{selfsimilarity}
J.~Harte, A.~Kinzig, J.~Green, {\it Science\/} {\bf 284}, 334 (1999).

\bibitem{noise2sim}
C.~Niu, G.~Wang, {\it 2011.03384 v1\/}  (2020).

\bibitem{noise2context}
Z.~Zhang, X.~Liang, W.~Zhao, L.~Xing, {\it Medical Physics\/}  (2021).

\bibitem{shanldct}
H.~Shan, {\it et~al.\/}, {\it Nature Machine Intelligence\/} {\bf 1}, 269–276
  (2019).

\bibitem{nmdenoising}
J.~Chen, et~al., {\it Nature Methods\/} {\bf 18}, 678–687 (2021).

\bibitem{noise2sim_seismic}
W.~Fang, L.~Fu, H.~Li, {\it IEEE Geoscience and Remote Sensing Letters\/} pp.
  1--1 (2021).

\bibitem{deepimaging}
G.~Wang, {\it IEEE Access\/} {\bf 4}, 8914 (2016).

\bibitem{redcnn}
H.~Chen, {\it et~al.\/}, {\it IEEE Transactions on Medical Imaging\/} {\bf 36},
  2524 (2017).

\bibitem{gram}
V.~Sreeram, P.~Agathoklis, {\it IEEE Transactions on Circuits and Systems I:
  Fundamental Theory and Applications\/} {\bf 41}, 234 (1994).

\bibitem{fda}
M.~A. Gavrielides, {\it et~al.\/}, {\it Opt. Express\/} {\bf 18}, 15244 (2010).

\bibitem{li2020clinical}
M.~Li, {\it et~al.\/}, {\it IEEE Access\/} {\bf 8}, 229018 (2020).

\bibitem{WU2021}
W.~Wu, {\it et~al.\/}, {\it Neural Networks\/}  (2021).

\bibitem{unet}
O.~Ronneberger, P.~Fischer, T.~Brox, {\it MICCAI\/}, N.~Navab, J.~Hornegger,
  W.~M. Wells, A.~F. Frangi, eds. (2015), pp. 234--241.

\bibitem{amfm_pami2011}
P.~Arbelaez, M.~Maire, C.~Fowlkes, J.~Malik, {\it IEEE Trans. Pattern Anal.
  Mach. Intell.\/} {\bf 33}, 898 (2011).

\bibitem{n2n_medical}
D.~Wu, K.~Gong, K.~Kim, X.~Li, Q.~Li, {\it MICCAI 2019\/} (2019).

\bibitem{adam}
D.~P. Kingma, J.~Ba, {\it ICLR\/}, Y.~Bengio, Y.~LeCun, eds. (2015).

\bibitem{Loshchilov2017SGDRSG}
I.~Loshchilov, F.~Hutter, {\it ICLR\/} (2017).

\end{thebibliography}


\section*{Acknowledgments}
\textbf{Funding:} This work was supported in part by NIH/NCI under Award numbers R01CA233888, R01CA237267, R21CA264772, and NIH/NIBIB under Award numbers R01EB026646, R01HL 151561, R01EB031102.
\textbf{Author contributions:} G. Wang and C. Niu originated the idea. C. Niu designed specific algorithms and conducted all experiments. C. Niu and F. Fan established the mathematical analysis. M. Li, X. Guo, and W. Wu collected and preprocessed the photon-counting data. M. Li analyzed properties of CT noises. C. Niu and G. Wang drafted the manuscript, all co-authors participated discussion, contributed technical points, and revised the manuscript iteratively.
\textbf{Competing interests:} All authors declare that they have no competing interests.
\textbf{Data and materials availability:} All data used in the paper are publicly available at the project page \url{http://chuangniu.info/projects/noise2im/}.

\section*{Supplementary Material}

The Supplementary Material includes:\\
Section S1: Application of Noise2Sim to Natural images.\\
Section S2: Analysis Among Noise2Noise, Noise2Void, and Noise2Sim.\\
Section S3:  Estimation of Conditional Discrepancy.\\
Section S4: Dataset Details.\\
Section S5: Network Architectures.\\
Section S6: Training Details.\\
Section S7: More Denoising Results on PCCT.\\
References \textit{(38-41)}

\renewcommand{\thepage}{S\arabic{page}} 
\renewcommand{\thesection}{S\arabic{section}}  
\renewcommand{\thetable}{S\arabic{table}}  
\renewcommand{\thefigure}{S\arabic{figure}}

\section{Application of Noise2Sim to Natural images}

Natural image benchmark datasets are commonly used to evaluate the effectiveness of new denoising algorithms.
In this section, we systemically evaluate the Noise2Sim theorem on the commonly used natural image datasets, including BSD68 \cite{7839189} containing grayscale images, BSD500 \cite{amfm_pami2011} and Kodak (http://r0k.us/graphics/kodak/) containing color images.
In the following subsections, we introduce Constructing Training Samples on 2D Images with Independent Noise, Comparison Results, and Ablation Studies, respectively.

\subsection{Noise2Sim Training on 2D Images with Independent Noise}

\label{sec_nisearch}
\begin{figure*}[h]
    \centering
    \includegraphics[width=1\textwidth]{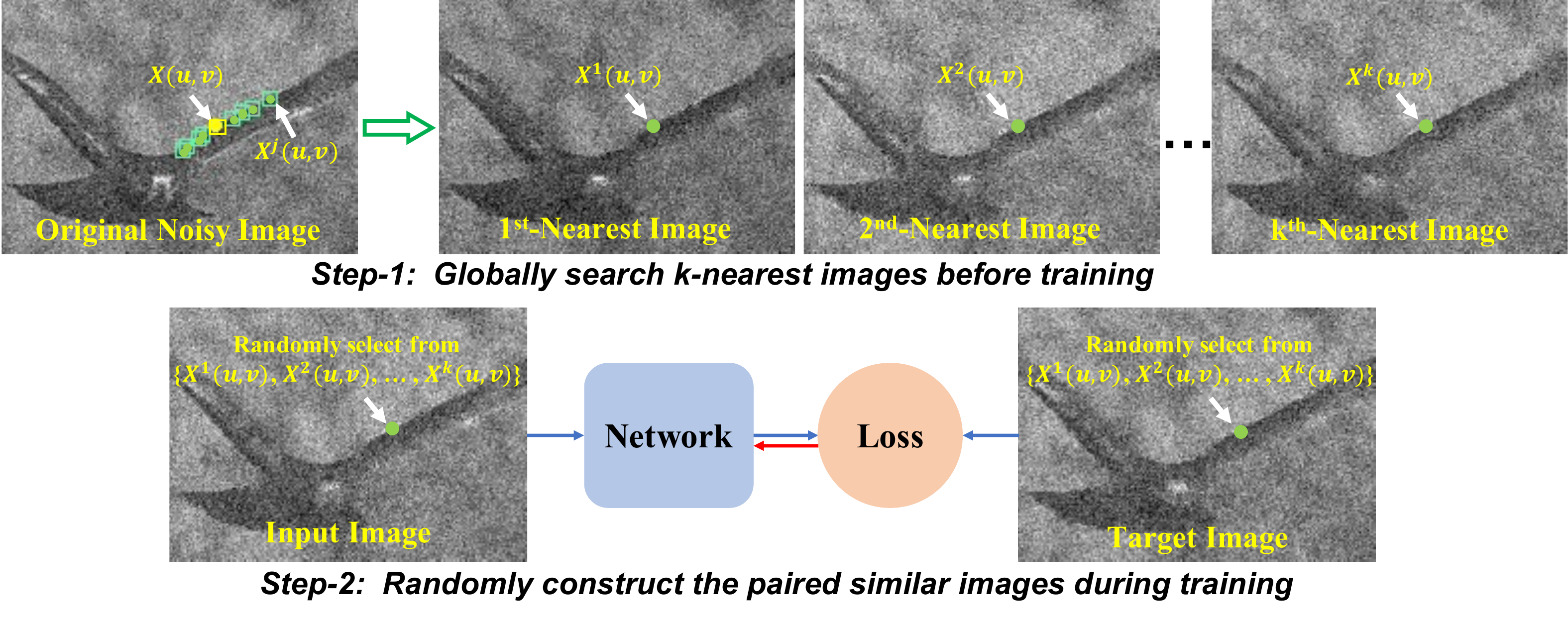}
    \caption{Noise2Sim training process on 2D images with independent noise. Step 1 is to search for a set of $k$ similar pixels for each pixel in the original noisy image, and form $k$ most similar images, which is also referred to as nearest images. In the original noisy image, the yellow and green points respectively denote the reference pixel $\bm{X}(u,v)$ and its $k$ nearest pixels $\bm{X}^j(u, v), j=1,\cdots, k$,  and the corresponding boxes present the patches respectively centered at these involved pixels that are used to compute the similarity of center pixels. The $k^{th}$ nearest image is formed by replacing all pixels in the original noisy image with their $k^{th}$ nearest pixels. Step 2 is to construct a pair of similar images as the input and the target to train a deep neural network. Each similar image is independently constructed by replacing every pixel $\bm{X}(u, v)$ in the original noisy image with the pixel randomly selected from $\mathcal{N}(u,v) = \{\bm{X}(u,v), \bm{X}^1(u,v), \cdots, \bm{X}^k(u,v)\}$, which are pre-computed in Step 1.}
    \label{fig:nisearch}
\end{figure*}

In the case of 2D images corrupted by independent noise, we propose an efficient two-step algorithm to construct a large set of similar images from noisy images only.
As introduced in the main text, the similar training set is defined as $\{(\bm{x}_i, \hat{\bm{x}}_i) | S(T(\bm{x}_i), T(\hat{\bm{x}}_i)) \}$, where $T$ is the transformation function for sub-images, and $S$ is the function to identify if two sub-images are similar or not.
Here the sub-images $x_i$ are defined as pixels, and similar images are constructed by replacing original pixels with the searched similar pixels during training. 
For simplicity, here $T$ is the identity function that means no transformations are applied to image pixels, and one can also use some transformations to reduce the variance of similarity estimation caused by noises \cite{bm3d}.
For the similarity estimation $S$, we adopt the k-NN strategy that each pixel is matched with k nearest similar pixels in terms of the Euclidean distance between their surrounding patches.

Specifically, let us describe this process at the pixel-level.
For each reference pixel $\bm{x}(u,v)$ with its coordinates $(u,v)$ in a given noisy image $\bm{x}$, we compute its $k$ nearest pixels over the whole image. The distance between two pixels $\bm{x}(u_1, v_1)$ and $\bm{x}(u_2, v_2)$ is defined as the Euclidean distance between their associated patches; i.e., $||\bm{S}(u_1, v_1) - \bm{S}(u_2, v_2)||_2$, where $\bm{S}(u, v)$ denotes a square patch that is determined by the pre-defined patch size and the center pixel $\bm{x}(u,v)$. Thus, each position in the image has a set of $k+1$ similar pixels (+1 means the reference pixel included), denoted as $\mathcal{N}(u, v) = \{\bm{x}(u,v), \bm{x}^1(u,v), \cdots, \bm{x}^k(u,v)\}$, where $\bm{x}^j(u,v)$ denotes the $j$-th nearest pixel relative to $\bm{x}(u,v)$. As shown in Fig. \ref{fig:nisearch}, the yellow and green dots denote the reference pixel and its nearest pixels, and the boxes are the associated patches.
Based on these similar pixel sets, a similar noisy image can be constructed by replacing every original pixel $\bm{x}(u, v)$ with a similar one randomly selected from $\mathcal{N}(u, v)$. Then, a pair of similar images is independently constructed in each iteration.

The number of all possible similar images to each given image is $(k+1)^{H \times W}$, where $H$ and $W$ represent the image height and width respectively.
If all similar images are prepared before training or on-the-fly during training, the memory space or computational time will be unacceptable.
Naturally, we propose to divide the Noise2Sim training process into two steps. First, we generate $k$-nearest similar images from a single noisy image. The $k$-nearest similar images are obtained by sorting the $k$-nearest similar pixels for each pixel location, i.e., the $j$-th nearest image is $[\bm{x}^j(u, v)]_{H\times W}$, as shown in Fig. \ref{fig:nisearch}.
Second, with these $k+1$ similar images, we randomly and independently construct a pair of similar images on-the-fly during training.
The  time searching for these similar images is acceptable in the first step, using an advanced  algorithm  on GPU. The construction of paired similar images takes little time in the second step.
There are alternative ways to construct the training pairs, see SI C-B for comparative results.
Also, since noise may harm the estimation of signal similarities, we propose to use the denoised image to improve the computation of the similarity between image patches, and then conduct Noise2Sim training again, which can be iterated if necessary; please see Subsection S5-\ref{sec_iter} for details.
In Subsection S5-\ref{sec_para}, we comprehensively studied the effects of the involved hyperparameters, the patch size and the neighborhood rand $k$.

\subsection{Comparison Results}

\begin{table}[htp]
\scriptsize
 \caption{PSNR results obtained using different denoising methods on the BSD68 dataset corrupted by Gaussian noise at different levels. Std means the standard deviation of Gaussian noise.}
  \centering
  \begin{tabular}{ccccccc}

   Std  & 15 & 25 & 35 & 45 & 55 & 65  \\
    \midrule
   $N2Clean$  & 31.62   & 28.96 &  27.33    & 26.17     & 25.33     & 24.58        \\   

   $N2Noise$ & 31.39   & 28.88 &  27.22    & 26.01     & 25.28     & 24.42        \\
   \hline
   $N2Void$   & 29.26    & 27.72      & 26.61     & 25.36     & 24.84    & 23.95        \\

   $NLM$  & 29.92    & 27.58      & 26.07     & 24.95     & 24.06     & 23.29      \\

   $N2Sim$    & \textbf{30.25}  & \textbf{28.27} & \textbf{26.98} & \textbf{25.75} & \textbf{24.89} & \textbf{24.06} \\
    \bottomrule
  \end{tabular}
  \label{tab_ns}
\end{table}

\begin{figure*}
    \centering
    \includegraphics[width=1\textwidth]{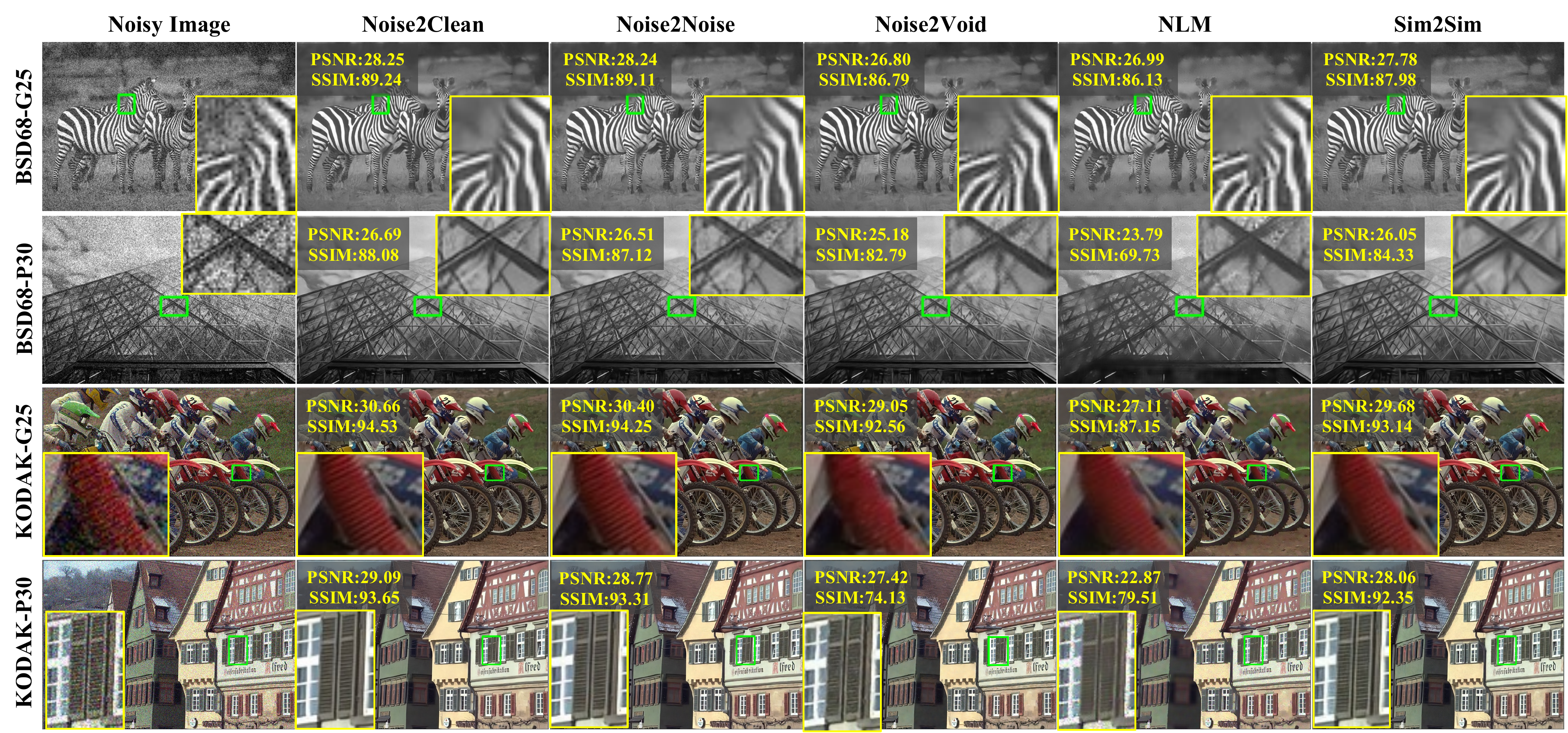}
    \caption{Visual comparison of denoising results on different datasets, where \emph{G25} means Gaussian noise with $Std=25$, and \emph{P30} means Poisson noise with $\lambda=30$. The PSNR and SSIM values are included.}
    \label{fig_nisvisresults}
\end{figure*}

\begin{table}[htp]
\scriptsize
  \renewcommand{\arraystretch}{1.5}
  \renewcommand\tabcolsep{3pt}
 \caption{Comparison of denoising results using different methods on representative datasets, where PSNR and SSIM(\%) values were reported.}
  \centering
  \begin{tabular}{lrrrrr}

   Dataset & Noise2Clean & Noise2Noise & Noise2Void & Noise2Sim \\
     \midrule
    BSD68-G25 & 28.96/89.78  & 28.88/89.44  & 27.72/86.65  & 28.27/88.17  \\
    BSD68-P30 & 28.08/88.26  & 27.98/87.78  & 27.08/85.24  & 27.37/85.70  \\
    Koak-G25 & 32.45/94.23  & 31.68/93.75  & 31.02/92.48  & 31.18/92.52  \\
    Koak-P30 & 31.74/93.65  & 31.52/93.41  & 30.35/91.34  & 30.44/91.56  \\
    \bottomrule
  \end{tabular}
  \label{tab:bsd68-compare}
\end{table}

\begin{table}[htp]
\scriptsize
  \renewcommand{\arraystretch}{1.5}
  \renewcommand\tabcolsep{1.3pt}
 \caption{Comparison results in terms of PSNR, SSIM, and runtime obtained using different methods on BSD68 with additive Gaussian noise of Std 25.}
  \centering
  \begin{tabular}{lrrrrrrr}

   Method & BM3D  & NLM & SNT & N2Void & N2Self & N2Same & N2Sim \\
    \midrule
   PSNR &  28.59 & 27.58  & 27.22 &  27.72    & 27.07     & 28.02     & 28.27    \\
   SSIM(\%) &  88.26 & 85.27 & 83.06  &  86.65    & 87.36        & 87.18    & 88.17    \\
   Runtime(s) &  3.32 & 0.43   & 141.87 &  0.001  & 0.002     & 0.002     & 0.001    \\
    \bottomrule
    
  \end{tabular}
  \label{tab_more}
\end{table}

For a fair comparison, we used the same datasets and network architecture as those used in the Noise2Void work~\cite{n2v}, and independent Gaussian and Poisson noises were used.
Let us first compared Noise2Sim with the baseline methods including Noise2Clean, Noise2Noise~\cite{n2n}, Noise2Void~\cite{n2v}, and NLM~\cite{nlm} methods in suppressing noises at different levels on the commonly used BSD68 dataset~\cite{7839189}, as shown in Table \ref{tab_ns}.
These results show that Noise2Sim is consistently better for a large range of noise levels $15<=std<=65$ than the competing unsupervised methods.
Then, we compared Noise2Sim with the baseline models on more datasets including the BSD500~\cite{amfm_pami2011} and Kodak datasets, which were corrupted by either Gaussian or Poisson noises.
The quantitative and visual results are shown in Table~\ref{tab:bsd68-compare} and Fig.~\ref{fig_nisvisresults}, demonstrating that Noise2Sim is consistently better than Noise2Void and NLM.
These results support that the Noise2Sim method can effectively remove additive Gaussian noises and conditional Poisson noises in gray-scale and color images respectively.
Moreover, we compared Noise2Sim with closely relevant methods in terms of PSNR and inference time, including BM3D~\cite{bm3d}, Noise2Self~\cite{noise2self}, Noise2Same~\cite{noise2same}, and SNT~\cite{snt} methods, as reported in Table~\ref{tab_more}.
Here the standard deviation (Std) of Gaussian noise was set to 25 (the image value range is [0, 255]), and the parameter of Poisson noise to $\lambda = 30$.
Noise2Sim has the same inference time as Noise2Clean, Noise2Noise, and Noise2Void, directly taking the noisy image as input and outputting the denoised image without any additional processing.
Noise2Self and Noise2Same take a little longer inference time as their networks are deeper than Noise2Void and Noise2Sim.
During training, Noise2Sim requires extra time to pre-compute similar images as compared with Noise2Clean and Noise2Noise that assume target images availabe in the first place. Nevertheless, this pre-processing time is very short (about 18 seconds for BSD400 dataset) relative to the network training time (several hours).
Although the classic BM3D method achieves the results comparable to the Noise2Sim counterparts, the inference time of BM3D is thousands times of that of Noise2Sim, which is consistent with what was reported in the Noise2Void paper~\cite{n2v}. Therefore, Noise2Sim is both effective and efficient in general.

\subsection{Ablation Studies}

\subsubsection{Effects of Patch Size and Neighborhood Range on Denoising Performance}
\label{sec_para}

\begin{table}[htp]
\scriptsize
  \renewcommand{\arraystretch}{1.5}
  \renewcommand\tabcolsep{4.7pt}
 \caption{PSNR results affected by patch size and noise level.}
  \centering
  \begin{tabular}{lrrrrrrrr}

    \diagbox[width=5em,height=2.2em]{s}{Std}  & 5 & 15 & 25 & 35 & 45 & 55 & 65 & 75 \\
    \hline
    $3 \times 3 $  & 33.85  & 29.90  & \textbf{28.14} & \textbf{26.73} & \textbf{25.73} & 24.89 & 24.06 & 23.45 \\

    $5 \times 5$  & \textbf{34.32}  & \textbf{29.98}  & 28.02 & 26.72 & 25.69 & 24.93 & 24.03 & 23.45 \\

    $7\times7$  & 33.13  & 29.87  & 27.93 & 26.70 & 25.67 & 24.83 & 24.02 & \textbf{23.74} \\

    $11\times 11$  & 32.10  & 29.82  & 27.43 & 26.56 & 25.69 & 24.86 & \textbf{24.25} & 23.71 \\

    $15 \times 15$  & 31.97  & 28.96  & 27.58 & 26.53 & 25.67 & \textbf{24.94} & 24.18 & 23.68 \\
    \hline
  \end{tabular}
  \label{tab_pn}
\end{table}

In the Noise2Sim method, searching for an appropriate set of similar pixels is a core task at each reference pixel.
In the search, a size-fixed square patch window is translated over a noisy image to find similar pixels. Hence, the patch size, denoted by $s$, is a key parameter that affects the accuracy of similarity estimation.
The effect of different patch sizes vs. various  Gaussian noise levels on PSNR are shown in Table \ref{tab_pn}. It can be seen  that the denoising performance of smaller patch sizes is better for lower noise levels but that of larger patch sizes is better for higher noise levels. This is due to the fact that it requires more contextual information to estimate the similarity accurately when pixels are heavily corrupted by noise.

\begin{table}[htp]
\scriptsize
  \renewcommand{\arraystretch}{1.5}
  \renewcommand\tabcolsep{8.3pt}
 \caption{PSNR results affected by the number of similar pixels and the patch size, for Gaussian noise of $Std=25$.}
  \centering
  \begin{tabular}{lrrrrrr}

   \diagbox[width=6em,height=2em]{s}{k}  & 2 & 4 & 8 & 16 & 32 & 64 \\
     \hline
    $3 \times 3$  & 26.93  & 27.90  & \textbf{28.15} & 28.08 & 27.95 & 27.01  \\

    $5 \times 5$  & 26.95  & 27.84  & \textbf{28.02} & 26.24 & 26.55 & 25.94  \\

    $7 \times 7$  & 26.91  & 27.68  & \textbf{27.93} & 27.71 & 27.50 & 27.28  \\

    $11 \times 11$  & 26.81  & \textbf{27.58}  & 27.43 & 27.23 & 25.21 & 25.02  \\

    $15 \times 15$  & 26.78  & 27.15  & \textbf{27.58} & 26.57 & 26.47 & 24.76  \\
    \hline
  \end{tabular}
  \label{tab_ks}
\end{table}

Moreover, the number of selected similar pixels $k$ determines the error term $\bm{\delta}$ defined in the \textbf{Methods} Section in the main text. 
Then, we evaluated the effect of the number of selected similar pixels on the denoising performance as shown in Table \ref{tab_ks}. The results show that the best denoising performance was generally associated with $k=8$ in our study.
These results suggest that there is a trade-off between the  error term and the number of training samples. Specifically, increasing the number of similar pixels will increase $\bm{\delta}$ values, while decreasing this number will increasing the noise residual in the denoised image. A good balance is $k=8$ in our experiments.

\subsubsection{Effects of Iterative Training Cycles on Denoising Performance}
\label{sec_iter}

\begin{figure}[h]
    \centering
    \includegraphics[width=0.7\textwidth]{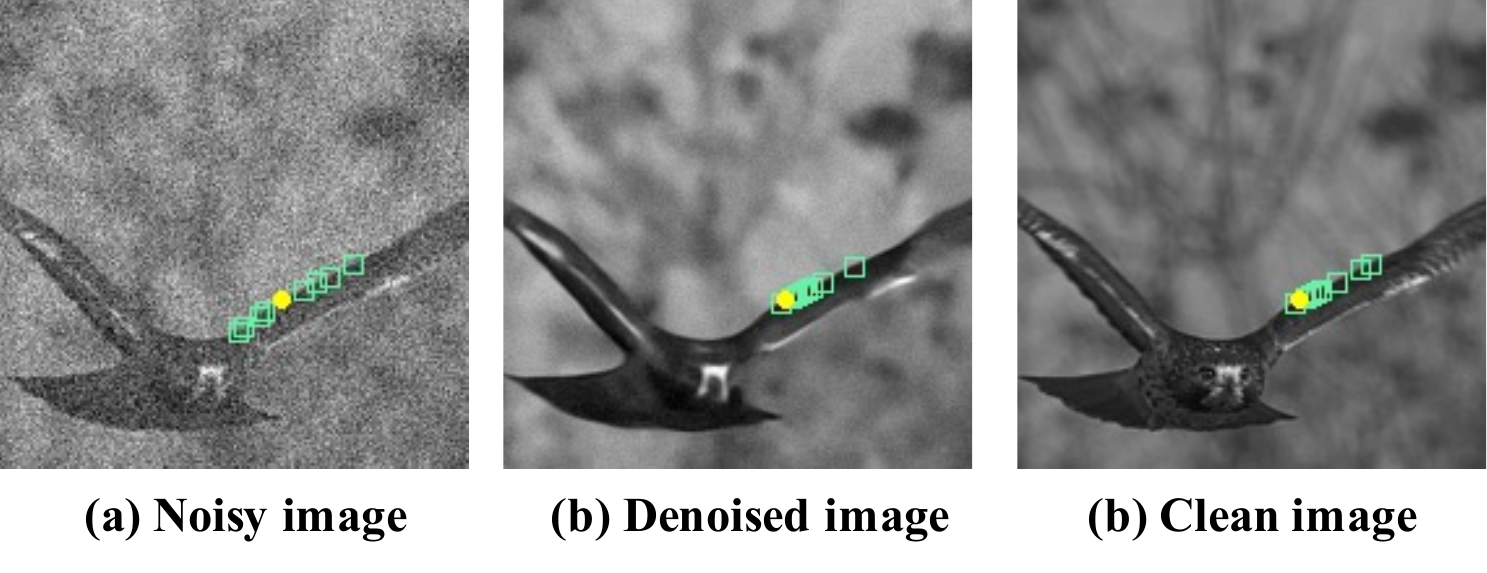}
    \caption{Distribution of similar patches in (a) an original noisy image, (b) the denoised image, and (c) the clean image.}
    \label{fig:cascade}
\end{figure}

\begin{table}[htp]
\scriptsize
  \renewcommand{\arraystretch}{1.5}
  \renewcommand\tabcolsep{6.5pt}
 \caption{PSNR results after iterative training at different noise levels (Std from 5 to 65).}
  \centering
  \begin{tabular}{lrrrrrrr}

    Std  & 5 & 15 & 25 & 35 & 45 & 55 & 65 \\
    \hline
    $W/O$  & 33.85  & 29.90  & 28.14 & 26.73 & 25.73 & 24.89 & 24.06 \\

    $With$  & 34.15  & 30.25  & 28.27 & 26.98 & 25.75 & 24.88 & 24.02 \\

    $Upper$  & 34.22  & 30.54  & 28.60 & 27.14 & 26.08 & 25.15 & 24.33 \\
    \hline
  \end{tabular}
  \label{tab:refine}
\end{table}

Any estimate of similarity between image patches is necessarily compromised in a noisy image, such an estimate may be improved in a denoised image produced by a trained denoising model. That is, the Noise2Sim idea can be repeatedly applied to refine the resultant denoising model. By doing so, the similarity measures can be gradually improved, leading to a superior denoising performance. Fig. \ref{fig:cascade} shows the change in the distribution of similar image patches after one iteration, and the distribution in the denoised image is very close to that in the clean image.
We also evaluated the iterative training by computing the similarity measures in reference to the clean image.
Table \ref{tab:refine} shows that the iterative training enhances the denoising performance, especially when noise level is small. When the images are severely corrupted, the initial denoising results are not good enough. As a result, iterative training might compromise the estimation of similar pixels.

\subsubsection{Effects of Pairing Methods on Denoising Performance}
\label{sec:similarity-mapping}

\begin{table}[htp]
\scriptsize
  \renewcommand{\arraystretch}{1.5}
  \renewcommand\tabcolsep{13pt}
 \caption{Results obtained using different methods for pairing similar pixels in patches of $3 \times 3$ for $k=8$ and Gaussian noise Std 25.}
  \centering
  \begin{tabular}{lrrrr}
   Pairing Method  & 1 & 2 & 3 & 4 \\
     \hline
    PSNR & 27.12  & 27.78  & 27.48 & \textbf{28.15}   \\
    \hline
  \end{tabular}
  \label{tab:map}
\end{table}

Given a set of $k+1$ similar images as described in Subsection S1-\ref{sec_nisearch}, there are four reasonable methods for pairing them: 1) Pair the original noisy image as the input to its randomly constructed similar image as the target; 2) reverse the input and label used in 1); 3) pair two of $k$ sorted similar images without pixel-wise randomization; and 4) pair similar images that were randomly and independently constructed pixel-wise.
The results in Table \ref{tab:map} demonstrate that the fourth pairing method achieves the best denoising performance.
The reason seems that, in the fourth pairing implementation, we use two randomly reconstructed images, which are $\s_i + \bm{\delta}'_i + \n_i$ and $\s_i + \bm{\delta}_i + \hat{\bm{n}}_i$, to train a denoising network, instead of using $\s_i + \n_i$ and $\s_i + \bm{\delta}_i + \hat{\bm{n}}_i$ as stated in the Noise2Sim theorem. Actually, they are equivalent as both of them are paired similar images. Let $\hat{\s}_i = \s_i + \bm{\delta}'_i$, then the randomly reconstructed pairs can be rewritten as $\hat{\s}_i + \n_i$ and $\hat{\s}_i + \bm{\delta}''_i + \hat{\bm{n}}_i$, where $\bm{\delta}''_i = \bm{\delta}_i - \bm{\delta}'$. We can verify that $E[\hat{\bm{n}}_i|\hat{\s}_i + \n_i] = \bm{0}$ and  $E[\bm{\delta}''_i|\hat{\s}_i + \n_i] = \bm{0}$, and the Noise2Sim Theorem remains true. The advantage of such an implementation is that the capacity of possible image pairs can be significantly increased from $(k+1)^{HW}$ to $(k+1)^{2HW}$, leading to a better performance.

\section{Analysis Among Noise2Noise, Noise2Void, and Noise2Sim}
\label{app_morecompare}

\begin{figure}[h]
    \centering
    \includegraphics[width=0.7\textwidth]{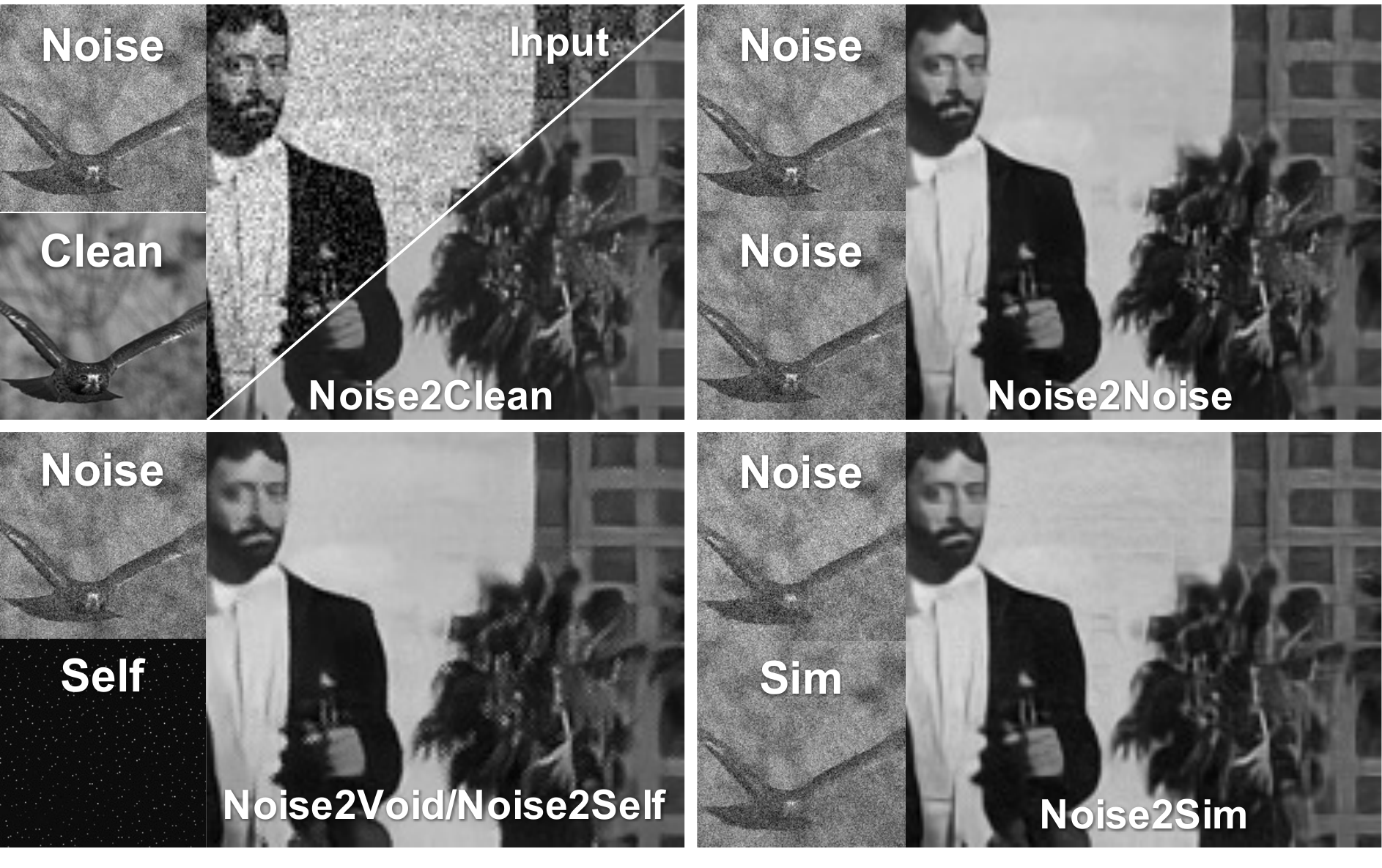}
    \caption{Comparison of different deep denoising methods. \emph{Noise2Clean} requires paired noise-clean samples. \emph{Noise2Noise} takes paired noise-noise samples, which are easier to collect than Noise2Clean counterparts but still impractical in many scenarios. \emph{Noise2Void} and \emph{Noise2Self} predict a small portion of excluded pixels based on their neighbors in single noisy images during training. Different from these methods, \emph{Noise2Sim} leverages self-similarity in a single noisy image and constructs training images using the noisy image itself in the same spirit of non-local means.}
    \label{fig:method_compare}
\end{figure}

To better understand the advantages of Noise2Sim, here we further compare Noise2Sim with Noise2Noise and Noise2void methods in terms of processing independent noise. Fig.~\ref{fig:method_compare} shows the training samples used in different deep denoising methods in the case of 2D images with independent noises.

For Noise2Noise, paired noisy images, i.e., $\bm{x}_i = \bm{s}_i + \bm{n}_i$ and $\bm{x'}_i = \bm{s}_i + \bm{n'}_i$, are required to train the network, where $\bm{n}_i$ and $\bm{n'}_i$ are two independent and zero-mean noise realizations.
The parameters $\bm{\theta_n}$ of the denoising network are optimized as

\begin{equation}
    \label{eq_loss_n}
    \bm{\theta}_n = \arg\min_{\bm{\theta}} \frac{1}{N_n} \sum_{i=1}^{N_n} || f(\s_i + \n_i; \bm{\theta}) - (\s_i + \bm{n'}_i)||_2^2,
\end{equation}
where $N_n$ is the number of paired images for Noise2Noise. Theoretically, if the conditional expectation $\mathbb{E}[\bm{n'}_i|\bm{s}_i + \bm{n}_i] = 0$ is satisfied, the learned parameters $\bm{\theta_n}$ with paired noise-noise images will be equal to $\bm{\theta_c}$ optimized under paired noise-clean images as $N_n\rightarrow \infty$~\cite{n2n_medical}. As $\hat{\bm{n}}_i$ is independent to $\bm{n}_i$, $\mathbb{E}[\bm{n'}_i|\bm{s}_i + \bm{n}_i] = \mathbb{E}[\bm{n'}_i|\bm{s}_i] = 0$ is true under the zero-mean noise assumption.
However, Noise2Sim cannot be applied when only single noisy images are available, while Noise2Sim still works well as demonstrated by our Noise2Sim theorem and experimental results.

For Noise2Void, only noisy images are required for network training. It splits the noisy image $\bm{x}_i$ into two parts, i.e., $\bm{x}_i = \bm{x}_i^c \cup \bm{x}_i^r$. As shown in Fig. \ref{fig:method_compare}, $\bm{x}_i^c$ contains a small set of pixels from $\bm{x}_i$ and used as the targets of the rest pixels $\bm{x}_i^r$. Thus, the parameters $\bm{\theta_v}$ of the denoising network are optimized as follows:
\begin{equation}
    \label{eq_loss_n}
    \bm{\theta_v} = \arg \min_{\bm{\theta}} \sum_{i=1}^{N_v} ||f(\bm{x}_i^r; \bm{\theta}) - \bm{x}_i^c||_2^2, 
\end{equation}
Three assumptions are required to make the Noise2Void network work~\cite{n2v}. First, the signal value is predictable from its local surrounding noisy pixels in the receptive field of the neural network. Second, the noise components of neighbor pixels are independent of each other. Third, the expectation of the noise component is zero, ensuring that the averaging operation gives the clean signal.

In contrast, the Noise2Sim theorem reveals that the similarity-based self-learning method is functionally equivalent to the paired learning methods Noise2Noise and Nosie2Clean.
Compared with Noise2Void and its variants, Noise2Sim has at least three advantages.
First, in terms of the training samples, we have $\bm{x}_i^r \subset \bm{x}_i$ and $\bm{x}_i^c \subset \bm{\hat{x}}_i$, so that Noise2Sim can be regarded as a non-local version of Noise2Void, leveraging both local and global information. Therefore, Noise2Sim enjoys principled superiority over Noise2Void.
Second, since Noise2Void assumes that the signal is predictable from its neighbors, it cannot preserve grainy structures or isolated pixels that are irrelevant to their neighbors and thus are not predicable.
Without relying on this signal predictability assumption, Noise2Sim does not suffer from this limitation and has the ability to preserve detailed structures as demonstrated in our experiments.
Third, current unsupervised denoising methods cannot process correlated noises due to that they aim to map between neighbor noisy pixels during training while the correlated noises are also predictable.
In contrast, Noise2Sim is a general approach that learns to map between paired similar training data, which can be constructed by adjacent or non-local pixels/patches/images/volumes. Particularly, two similar patches or sub-volumes can be far from each other within an image, the correlation between their structured noises can be ignored. Thus, Noise2Sim can effectively reduce correlated noise in various real-world applications.

\section{Estimation of Conditional Discrepancy}
\label{sec_estimate}
Here we aim to empirically estimate the values of the conditional expectation $\mathbb{E}[\bm{\delta_i} | \bm{s_i} + \bm{n}_i]$ on the BSD400 dataset, where the standard deviation of Gaussian noise was 25. Specifically, it was estimated with the following equation
\begin{equation}
\label{eq_estimate}
\mathbb{\hat{E}}[\bm{\delta_i} | \bm{s_i} + \bm{n}_i] = \frac{1}{M} \sum_{i=1}^{M} (\bm{\hat{x}'}_i - \bm{\hat{x}}''_i),
\end{equation}
where $\bm{\hat{x}'}_i$ and $\bm{\hat{x}}''_i$ are similar image pairs randomly constructed given one of BSD400 images, and $M$ was set to $5000\times 400 = 2\times 10^6$ (it means each image was repeatedly used 5,000 times), being consistent with the training process. In our experiments, we found that the values of this estimation are closed to zeros, i.e., $\mathbb{\hat{E}}[\bm{\delta_i} | \bm{s_i} + \bm{n}_i] \in [-0.0014, 0.0013]$, where $s_i$ was normalized to $[0, 1]$. Thus, the zero-mean conditional similarity condition for the Noise2Sim theorem can be well satisfied.

\section{Dataset Details}
\subsection{Natural Images}
In the section S1, we used four publicly available datasets including BSD400 and BSD68 \cite{7839189} containing grayscale images, BSD500 \cite{amfm_pami2011} and Kodak (http://r0k.us/graphics/kodak/) containing color images. Specifically, for grayscale images, we used BSD400 that consists of 400 $180 \times 180$ images for training and BSD68 that consists of 68 different sizes of images for testing. For color images, we used BSD500 that consists of 500 different image sizes for training and Kodak that consists of 24 either $768\times 512$ or $512\times 768$ images for testing.
All the above images share the pixel value range of $[0, 255]$ and were corrupted by two common noise distributions, i.e., Gaussian and Poisson.
For Gaussian noise, the noisy images were simply obtained by adding a zero-mean Gaussian noise images with any standard deviations into a clean image.
For Poisson noise, the noisy images $\bm{x}_i$ was computed as $\bm{x}_i = Poisson(\bm{s}_i/\lambda)/\lambda$, which is the same as in \cite{NIPS2019_8920}.
Note that Poisson noise is conditioned on the clean image, and the mean of noisy images is the corresponding clean image, which means a zero-mean noise.

\subsection{Low-dose CT Images}
In the main text, we used the publicly available Mayo dataset for the Low Dose CT (LDCT) Grand Challenge (https://www.aapm.org/grandchallenge/lowdosect/).
We used eight patients data for training and the other two for testing. Specifically, the training dataset consists of 4800 $512 \times 512$ slices, and the testing dataset consists of 1136 $512 \times 512$ slices.
To test the generalizability of different methods, we used two real scans of an anthropomorphic phantom from FDA (https://wiki.cancerimagingarchive.net/display/Public/Phantom+FDA).
Specifically, we used five volumetric data including 1) a low-dose (25mA) volume with \emph{b40f} kernel and a normal-dose (200mA) volume with \emph{b40f} kernel, 2) a low-dose (25mA) volume with \emph{b60f} kernel and two normal-dose (200mA) volumes with \emph{b40f} kernel, where these two normal-dose volumes contain the same contents but different noise realizations. Each of these volumes contain 408 slices with the image size of $512 \times 512$.

\subsection{Photon-counting Micro-CT Images}
As for the PCCT data used in the paper, we have scanned both a chicken leg phantom and a live mouse animal model for quantitative and qualitative experiments, respectively. The scans are performed on a commercial photon-counting micro-CT scanner (MARS, MARS Bioimaging Ltd., Christchurch, New Zealand) with a cone beam circular scanning geometry. The source is operated at 80kVp/50$\mu$A with 1.96mm Al filtration. There are 3 PCD chips stitching side by side each with $128\times 128$ pixels and 5 useful thresholds under the charge summing mode forming 5 effective energy bins, and the pixel size is $0.11\times 0.11 mm^2$. Two lateral translations have been performed during the scan to cover the $40mm$-diameter field of view, and 1440 views are collected for one rotation at each translation. The 5 energy bins are 7-20, 20-30, 30-47, 47-73 and $>$73 in keV. Due to the not negligible amount of bad pixels, iterative method is used for reconstruction. Specifically, the reconstructions are performed with 0.1mm isotropic voxels and with simultaneous algebraic reconstruction (SART) algorithm for 200 iterations. The resultant volume is of size $550 \times 550 \times 130$ for each energy bin, and 5 bins in total. 

For the quantitative experiment, a chicken leg with bones is placed in a plastic tube for consecutive scans of normal dose (300ms exposure for each view) and low dose (100ms exposure for each view). For the qualitative experiment, a live mouse bearing a tumor model and injected with blood pool contrast agent (ExiTron nano 12000, NanoPET, Berlin, Germany) is scanned with 300ms exposure for each view under anesthesia with isoflurane. Note that iterative reconstruction technique has inherent denoising effect, and usually the effect is strong at a small iteration number and gradually diminishes as the iteration number increases. On the other hand, the resolution gets enhanced (the image becomes sharper) together with the noise along with the increase of iteration. Hence, for merely SART reconstruction 200 iterations are empirically used to obtain a good balance between noise and resolution (references in Fig.~\ref{fig:pcct-leg}). For better performance with hybrid use of SART and Noise2sim, the PCCT images are first reconstructed with 600 iterations (inputs in Figs.~\ref{fig:pcct-leg} and \ref{fig:pcct-mouse}) to best reserve the resolution and then go through the network to suppress the increased noise to obtain the final clean high-resolution images.

For more results of PCCT data in section \ref{sec_morepcct}, we additionally scanned a dead mouse on another MARS spectral CT system, which includes a micro x-ray source and a flat-panel photon-counting detector (PCD) as well.
This flat-panel PCD is larger and has 660×124 pixels. The emitting x-ray spectrum at 120kVp is divided into the five energy bins: [7.0 32.0], [32.1 43], [43.1 54], [54.1 70] and [70 120]. The distances between the source to the PCD and object are 310 mm and 210 mm, respectively. In this study, we collected 5,760 views with a translation distance 88.32mm for helical scanning. The number of views for one circle is 720. The voxel size in each energy channel was set to $0.1 \times 0.1 \times 0.1 mm^3$. All reconstructed images were performed using the filtered back-projection method. The reconstruction volume is of $667 \times 394 \times 394 \times 5$, i.e., the slice size is $394 \times 394$, 667 slices in total, and 5 energy channels in use.

\section{Network Architectures}
In all our experiments, we used a simple two-layer UNet \cite{unet} with a residual connection as the denoising network in Noise2Sim, which is the same as that used in \cite{n2v}.
In RED-CNN~\cite{redcnn} and MAP-NN~\cite{shanldct}, specified network architectures are designed for LDCT denoising.

\section{Training Details}
In fast searching for similar pixels in natural images images, we used a publicly available library (https://github.com/facebookresearch/faiss), powered by GPU.
In all the experiments, we used the same data augmentation strategy as in \cite{n2v}, including random cropping followed by random $90^{\circ}$-rotation and mirroring.
The Adam \cite{adam} algorithm was coupled with the cosine learning rate schedule \cite{Loshchilov2017SGDRSG}, with the initial learning rate 0.0005.
Our method was implemented on the PyTorch (https://pytorch.org/) deep learning platform. The codes have been made available at https://github.com/niuchuangnn/noise2sim.

\section{More Denoising Results on PCCT}
\label{sec_morepcct}

\begin{figure}[h]
    \centering
    \includegraphics[width=1\textwidth]{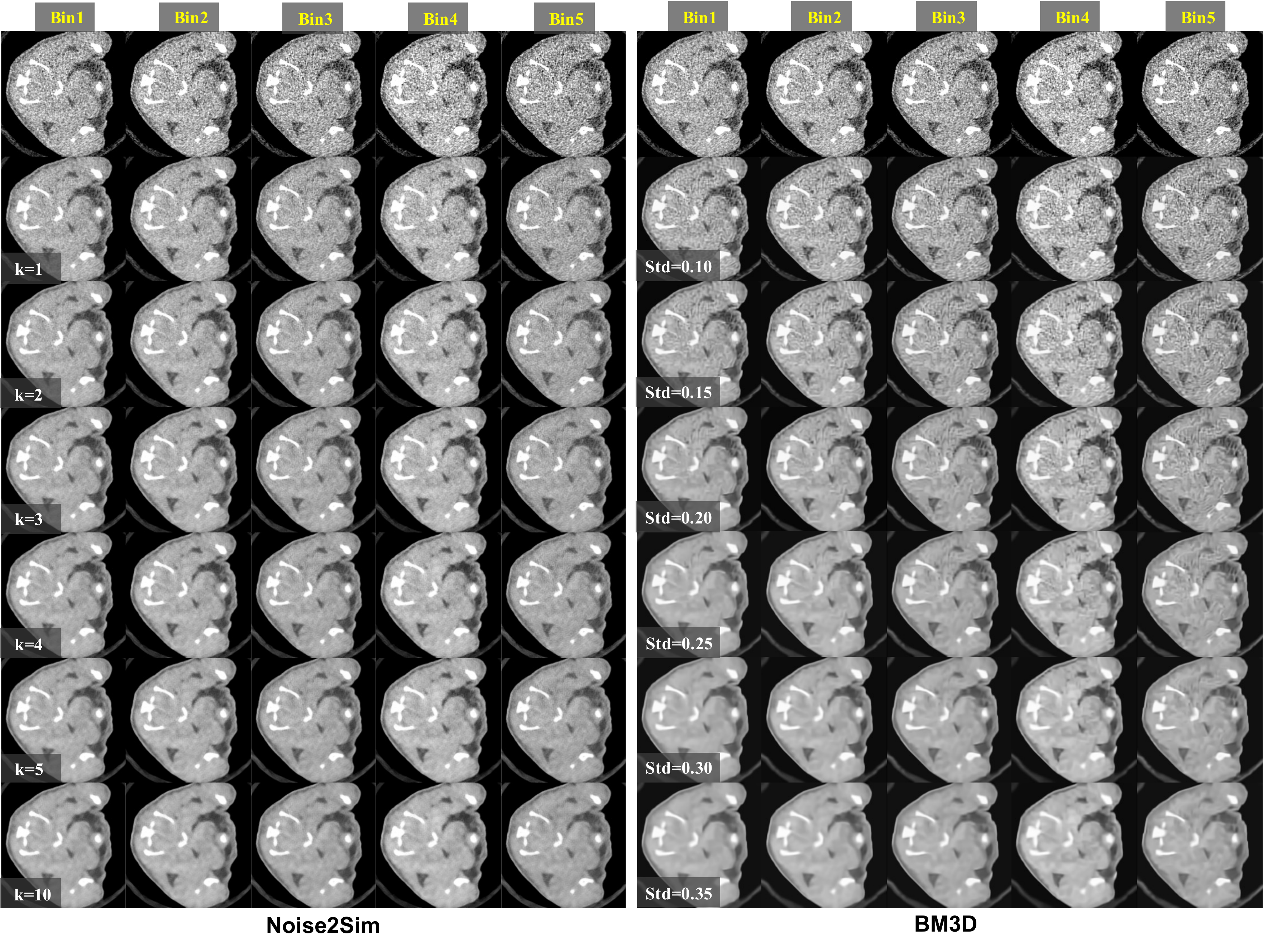}
    \caption{Noise2Sim and BM3D results on 4D spectral CT images obtained at different denoising levels.}
    \label{fig:sctall}
\end{figure}

As described in the main text, both Noise2Sim and BM3D have the ability to control the denoising level. Here we give a comprehensive visual comparison between Nosie2Sim and BM3D methods under different denoising levels, as shown in Fig.~\ref{fig:sctall}.
It can be seen that BM3D for a small denoising level generated severe structured artifacts , especially in the $5^{th}$ energy bin with strong noise. On the other hand, BM3D for a large denoising level over-smoothed image details. In contrast, Noise2Sim can remove more noise by increasing the neighborhood parameter $k$, and does not produce any artifacts and well preserves structural details.
Even for $k=10$, the subtle structures are still clear in the Noise2Sim results, without being overly smoothed as shown in the BM3D results.

\end{document}